\documentclass[journal]{IEEEtran} 

\usepackage{amsmath,amsfonts,bm}
\usepackage{amsthm,amsmath,amssymb}
\usepackage{eucal}
\usepackage{xifthen}
\usepackage{mathtools}
\usepackage{enumerate}
\usepackage{microtype}
\usepackage{xspace}
\usepackage{bm}
\usepackage{cite}
\usepackage{fancyhdr}
\usepackage{lastpage}
\usepackage[small]{caption}
\usepackage{xcolor}
\usepackage{algorithm}
\usepackage{algpseudocode}
\allowdisplaybreaks

\def\eqref#1{Eq-(\ref{#1})}

\newcommand{\nn}{\nonumber}

\newtheorem{remark}{Remark}
\newtheorem{theorem}{Theorem}
\newtheorem{lemma}{Lemma}

\def\1{\bm{1}}

\allowdisplaybreaks

\DeclareMathAlphabet{\mathcal}{OMS}{cmsy}{m}{n}

\def\vzero{{\bm{0}}}
\def\vone{{\bm{1}}}

\def\va{{\bm{a}}}

\def\vu{{\bm{u}}}
\def\vv{{\bm{v}}}
\def\vw{{\bm{w}}}
\def\vx{{\bm{x}}}
\def\vy{{\bm{y}}}

\def\mA{{\bm{A}}}
\def\mB{{\bm{B}}}
\def\mC{{\bm{C}}}
\def\mD{{\bm{D}}}

\def\mG{{\bm{G}}}
\def\mH{{\bm{H}}}
\def\mI{{\bm{I}}}
\def\mJ{{\bm{J}}}

\def\mL{{\bm{L}}}

\def\mN{{\bm{N}}}

\def\mR{{\bm{R}}}

\def\mU{{\bm{U}}}
\def\mV{{\bm{V}}}
\def\mW{{\bm{W}}}
\def\mX{{\bm{X}}}
\def\mY{{\bm{Y}}}
\def\mZ{{\bm{Z}}}

\def\mLambda{{\bm{\Lambda}}}

\DeclareMathAlphabet{\mathsfit}{\encodingdefault}{\sfdefault}{m}{sl}
\SetMathAlphabet{\mathsfit}{bold}{\encodingdefault}{\sfdefault}{bx}{n}

\def\gA{{\mathcal{A}}}

\def\gG{{\mathcal{G}}}
\def\gH{{\mathcal{H}}}

\def\gN{{\mathcal{N}}}
\def\gO{{\mathcal{O}}}

\def\gS{{\mathcal{S}}}

\def\gX{{\mathcal{X}}}

\def\gZ{{\mathcal{Z}}}

\renewcommand{\vec}{\mathrm{vec}}

\newcommand{\E}{\mathbb{E}}

\newcommand{\R}{\mathbb{R}}

\newcommand{\KL}{D_{\mathrm{KL}}}

\newcommand{\Norm}[1]{\|#1\|}
\newcommand{\Abs}[1]{|#1|}

\DeclareMathOperator{\Tr}{Tr}

\usepackage{hyperref}
\usepackage{url}
\usepackage{amssymb}
\usepackage{amsmath}

\title{Topology-Aware PAC-Bayesian Generalization Analysis for Graph Neural Networks}

\author{Xinping Yi, \emph{Member, IEEE} 
\thanks{X. Yi is with the School of Information Science and Engineering, Southeast University, Nanjing, China. Email: \texttt{xyi@seu.edu.cn}.}
}

\begin{document}

\maketitle

\begin{abstract}
Graph neural networks have demonstrated excellent applicability to a wide range of domains, including social networks, biological systems, recommendation systems, and wireless communications. Yet a principled theoretical understanding of their generalization behavior remains limited, particularly for graph classification tasks where complex interactions between model parameters and graph structure play a crucial role. Among existing theoretical tools, PAC-Bayesian norm-based generalization bounds provide a flexible and data-dependent framework; however, current results for GNNs often restrict the exploitation of graph structures. In this work, we propose a topology-aware PAC-Bayesian norm-based generalization framework for graph convolutional networks (GCNs) that extends a previously developed framework to graph-structured models. Our approach reformulates the derivation of generalization bounds as a stochastic optimization problem and introduces sensitivity matrices that measure the response of classification outputs with respect to structured weight perturbations. By imposing different structures on sensitivity matrices from both spatial and spectral perspectives, we derive a family of generalization error bounds with graph structures explicitly embedded. Such bounds could recover existing results as special cases, while yielding bounds that are tighter than state-of-the-art PAC-Bayesian bounds for GNNs. Notably, the proposed framework explicitly integrates graph structural properties into the generalization analysis, enabling a unified inspection of GNN generalization behavior from both spatial aggregation and spectral filtering viewpoints.

\end{abstract}


\section{Introduction}
Despite the wide adoption and strong empirical performance of graph neural networks in learning from relational and structured data across domains such as social networks, biology, and wireless networks, fundamental questions as to how and why these models generalize beyond the training graphs are still largely unexplored.
From a statistical learning perspective, the nature of graph data, together with weight sharing and topology-dependent message passing, poses fundamental challenges for classical generalization analysis. To address these challenges, a variety of theoretical frameworks have been explored for GNNs, including VC dimension–based capacity control \cite{vapnik2015uniform,scarselli2018vapnik}, data-dependent measures such as Rademacher complexity \cite{bartlett2002rademacher,garg2020generalization}, algorithmic stability \cite{bousquet2002stability,verma2019stability}, and PAC-Bayesian approaches \cite{mcallester2003simplified,liao2020pac} tailored to dependent samples and graph structures. 

Among existing approaches, PAC-Bayesian theory has emerged as one of the most promising tools for analyzing deep learning models, owing to its flexibility, data-dependent nature, and ability to produce non-vacuous bounds in over-parameterized regimes. A growing body of work has demonstrated the effectiveness of PAC-Bayesian analysis for neural networks (e.g.,\cite{neyshabur2015norm,dziugaite2017computing,neyshabur2018pac,zhou2018non,lotfi2022pac}) and, more recently, for graph-structured models (e.g., \cite{liao2020pac,ju2023generalization,sun2024pac,brilliantov2024compositional,wang2025generalization}), motivating the development of a topology-aware PAC-Bayesian generalization framework that explicitly accounts for the architectural and spectral properties unique to graph neural networks.

\subsection{Related Works}

Theoretical analysis of generalization for graph neural networks has become an active research area in statistical learning, driven by the wide adoption of GNNs across node-level and graph-level prediction tasks. Early attempts to characterize GNN generalization relied on classical capacity measures such as the VC dimension (e.g., \cite{scarselli2018vapnik} for a specific class of GNNs and \cite{esser2021learning} for node classification in the transductive setting), which directly bounds the generalization gap but often scales unfavorably with model and graph size. More refined analyses have used Rademacher complexity (e.g., \cite{garg2020generalization,karczewski2024generalization} for message passing GNNs in the supervised setting, \cite{esser2021learning} for node classification in the transductive setting, \cite{lv2021generalization} for one-layer GCNs in the semi-supervised node classification) and algorithmic stability (e.g., \cite{verma2019stability} for 1-layer GNNs in the transductive setting, and the multi-layer extension in \cite{zhou2021generalization}) to obtain data-dependent bounds that capture graph sampling and aggregation dynamics. There also exist some other approaches such as \cite{du2019graph} considered an infinitely wide multi-layer GNN with neural tangent kernel, and a number of works leverages the covering number \cite{levie2023graphon,maskey2022generalization,maskey2025generalization,rauchwerger2024generalization,vasileiou2024covered}. Please refer to the survey of generalization for GNNs \cite{vasileiou2025survey} and references therein.

Of particular relevance is the norm-based PAC-Bayesian framework, which has been successfully extended to GNNs in graph classification tasks for high-probability generalization guarantees that incorporate both model complexity and uncertainty in learned parameters. 
Recent works demonstrated that graph-specific factors such as maximum node degree, feature distribution, and spectral properties of diffusion operators can significantly influence generalization behavior.
A notable example is the work by Liao et al. \cite{liao2020pac}, which derives PAC-Bayesian generalization bounds for major classes of GNNs, including graph convolutional networks (GCNs) \cite{kipf2017semi} and message passing GNNs (MPGNNs) \cite{gilmer2017neural}. Their results reveal that both the maximum node degree and the spectral/Frobenius norms of weight matrices govern the generalization bound. 
Subsequent PAC-Bayesian analyses have further refined this perspective, for example, \cite{sun2024pac} addressed robust generalization under adversarial perturbations in graph data, while still leveraging spectral properties in the bound formulation.
Ju et al. \cite{ju2023generalization} developed improved PAC-Bayesian generalization bounds that scale with the largest singular value of the graph feature diffusion matrix rather than solely with node degrees. 
Importantly, the use of spectral norms of graph operators and weight matrices aligns well with empirical observations about generalization in graph-level tasks and offers sharper insight into how graph structure impacts predictive performance.
Most recently, such a norm-based PAC-Bayesian framework has been extended to derive generalization bounds for topological \cite{brilliantov2024compositional} and hypergraph \cite{wang2025generalization} neural networks.
Notably, these results rely critically on extending the perturbation bound in \cite{neyshabur2018pac} to graph-structured models with matrix spectral norm concentration, which limits potential thorough inspection into the impact of network architectures.

\subsection{Motivation and Contributions}
\textbf{Motivation.}
Despite the growing body of PAC-Bayesian generalization results for graph neural networks, existing analyses remain subject to several limitations that motivate the current work. First, most prior works assume isotropic Gaussian posteriors with homogeneous uncertainty across all parameters, which is particularly misaligned with GNNs, where weight sharing, depth-dependent aggregation, and graph-induced correlations lead to highly anisotropic and layer-specific sensitivity patterns in practice. Second, existing PAC-Bayesian bounds for GNNs often rely heavily on controlling the spectral norms of weight perturbations and graph propagation operators, where the corresponding spectral norm concentration inequalities are known to be brittle for deep, wide, or structured architectures, therefore limiting the tightness of the resulting guarantees. Third, existing approaches typically incorporate graph structure only through coarse quantities such as maximum degree or global spectral radius, and rarely account for the heterogeneous influence of different graph components or the diverse sensitivity of the loss landscape with respect to individual weights and layers. These limitations collectively suggest that a topology-aware PAC-Bayesian norm-based generalization framework for GNNs—one that allows anisotropic posteriors, refined perturbation control beyond global spectral norms, and explicit integration of graph structure and parameter-wise sensitivity—is essential for a more theoretically grounded understanding of generalization in graph neural networks.

\textbf{Contributions.} 
Building on the above motivation, this work makes the following contributions toward a topology-aware theory of PAC-Bayesian norm-based generalization for graph neural networks, with a particular focus on graph convolutional networks (GCNs) for graph classification:
\begin{itemize}
\item \textbf{Topology-aware generalization bounds within a unified framework.} 
We demonstrate that the previously proposed unified PAC-Bayesian norm-based generalization framework for deep neural networks in \cite{yi2026unified} can be naturally extended to graph-structured models, explicitly accounting for the message-passing and aggregation mechanisms inherent to GCNs.
By introducing sensitivity matrices that capture the graph classification differences to weight perturbations in GCNs, we derive a family of PAC-Bayesian generalization bounds under different structural considerations. In particular, imposing diagonal, low-rank, or graph-aligned structures on these matrices—motivated from both spatial (node- and neighborhood-level aggregation) and spectral (graph Fourier and operator-norm) perspectives—yields a spectrum of generalization error bounds. These bounds recover existing PAC-Bayesian results for GNNs as special cases and are shown to be tighter than state-of-the-art bounds that rely on isotropic perturbations or global spectral-norm control.
\item \textbf{Explicit integration of graph structure into generalization analysis.} Our framework explicitly integrates graph structural information into the generalization bounds, enabling a transparent inspection of how topology, diffusion operators, and spectral properties of the graph influence generalization performance. By either capturing spatial aggregation effects or spectral filtering behavior, the resulting bounds provide new insights into the role of graph structure in controlling model complexity and robustness, offering a distinct lens through which the generalization behavior of GNNs can be analyzed from both spatial and spectral viewpoints.
\end{itemize}

The paper is organized as follows: Section I presents some preliminaries for graph neural networks and the PAC-Bayesian margin-based bounds, followed by the extension of a unified framework to the graph settings. Section IV details the specific designs of different sensitivity matrices, ended up with various PAC-Bayesian generalization bounds under the unified framework. Section V concludes the paper, and the Appendix gives some detailed proofs of key results used in Section III.

\textbf{Notation.} We interchangeably use random variables and their realizations unless they are unclear from the context.
We use $a$, $\va$, $\mA$, and $\gA$ to represent the scalar, vector, matrix and set, respectively. Accordingly, $\va_i$ is the $i$-th element of $\va$, $\mA_{i,:}$ is the $i$-th row of $\mA$, and $\mA_{ij}$ is the element at the $i$-th row and the $j$-th column in $\mA$. The vectorization of a matrix, i.e., $\vec(\mA)$, is the concatenation of the columns of $\mA$ as a long vector. We use $\vone$, $\vzero$, $\mathbf{0}$, and $\mI_n$ to denote the all-one, all-zero vectors, all-zero matrix, and the $n \times n$ identity matrix, respectively, and the size of the vectors is determined by the context. 
The operators $\Tr(\cdot)$ and $\det(\cdot)$ are the trace and determinant of a square matrix, respectively.
Given a vector $\va$, the $\ell_2$ and $\ell_{\infty}$ vector norm are given by $\Norm{\va}_2=\sqrt{\sum_i \va_i^2}$ and $\Norm{\va}_{\infty}=\max_i \Abs{\va_i}$, with $\Norm{\va}_{\infty} \le \Norm{\va}_2$. Given a matrix $\mA$, we use $\Norm{\mA}_2$, $\Norm{\mA}_F$, $\Norm{\mA}_{2,\infty}$ to represent the spectral norm, the Frobenius norm, and the $(2,\infty)$ norm, such that $\Norm{\mA}_2=\max_{\vx \neq \vzero} \frac{\Norm{\mA \vx}_2}{\Norm{\vx}_2}$, $\Norm{\mA}_F^2=\sum_i \sum_j \mA_{ij}^2 = \Tr(\mA^T\mA)$, and $\Norm{\mA}_{2,\infty}=\max_i \Norm{\mA_{i,:}}_2^2$, which is the maximum value of the vector norms of the rows of $\mA$. 
For any vector $\vx$ and any matrices $\mA$ and $\mB$, we have $\Norm{\mA\vx}_2 \le \Norm{\mA}_2 \Norm{\vx}_2$ and $\Norm{\mA \mB}_2 \le \Norm{\mA}_2 \Norm{\mB}_2$.
Given two matrices $\mA \in \R^{m \times n}$ and $\mB \in \R^{p \times q}$, the Kronecker product $\mA \otimes \mB$ is a $mp \times nq$ matrix such that the $(i,j)$-th block is $\mA_{ij}\mB$ with size $p \times q$.
For two matrices $\mA$ and $\mB$, there are some properties of the Kronecker product, e.g., $\mA \otimes (a\mB) = a (\mA \otimes \mB)$, $(\mA \otimes \mB)^T=\mA^T \otimes \mB^T$, $(\mA \otimes \mB)(\mC \otimes \mD)=(\mA\mC) \otimes (\mB \mD)$, $\det (\mA_{n \times n} \otimes \mB_{m \times m})=(\det \mA)^m (\det \mB)^n$, $\Tr(\mA \otimes \mB)=\Tr(\mA)\Tr(\mB)$, $\Norm{\mA \otimes \mB}_2 = \Norm{\mA}_2 \Norm{\mB}_2$, $\Norm{\mA \otimes \mB}_F = \Norm{\mA}_F \Norm{\mB}_F$, and $\mathrm{vec}(\mA \mX \mB) = (\mB^T \otimes \mA) \mathrm{vec}(\mX)$, according to Lemma \ref{lemma:kron-norm-eq}.
The matrix $\mA$ is said to be semi-definite, denoted by $\mA \succeq \mathbf{0}$, such that for any $\vx$, $\vx^T \mA \vx \ge 0$ holds. $\mA \succeq \mB$, equivalently $\mB \preceq \mA$, if and only if $\mA-\mB \succeq \mathbf{0}$.

\section{Preliminaries}
\label{gen_inst}

Following the problem setting of \cite{liao2020pac}, we consider the $K$-class graph classification task for simple and undirected graphs $G=(V,E)$ with $n$ nodes.
The input space $\gZ$ consists of graph samples $z=(\mA_G,\mX,y) \in \gZ$ where $\mA_G \in \{0,1\}^{n \times n}$ is the adjacency matrix such that $[\mA_G]_{ij}=1$ if $(i,j) \in E$ is an edge, $\mX \in   \mathbb{R}^{n \times h_0}$ is the node features over $n$ nodes subject to $\Norm{\mX}_{2,\infty} = \max_{i} \Norm{\mX_{i,:}}_2\le B$, and $y \in \{1,\dots,K\}$ is the output.
We denote $y = \arg \max_y f_{\vw}(\mX)[y]$, where $f_{\vw} \in \gH: \mathcal{X} \times \gG \to \mathbb{R}^K$ is a function specified by a parameterized learning model (e.g., graph neural networks) with the collected model parameters $\vw$.\footnote{For simplicity, the bias term is absorbed into the weight matrices by appending constant 1 to the node feature, which is a common trick in deep learning community.} 
The space $\gZ$, $\gX$, $\gG$ and $\gH$ are defined as input sample space, node feature space, graph space, and hypothesis class space, respectively.
A dataset $\mathcal{S}=\{z_1, \dots, z_m\}$ with $m$ training samples drawn identically and independently from an unknown distribution $\mathcal{D}$ is given to learn the model parameters.

For the graph classification task, a graph convolutional network (GCN)-type model is considered such that the node embeddings at the $l$-th graph convolutional layer ($l < d$) and the last readout layer are given by
\begin{align}
   \text{Conv: } \mH_{l} &= \phi({\mL} \mH_{l-1} \mW_l)\\
   \text{Readout: } f_{\vw}^T(\mX) &= \frac{1}{n} \vone^T \mH_{d-1} \mW_d
\end{align}
where $\phi$ is a 1-Lipschitz element-wise activation function, e.g., ReLU and tanh, $\mL$ is the graph diffusion/propagation matrix, $\mH_l \in \R^{n \times h_l}$ is node embedding at the $l$-th GCN layer with $\mH_0=\mX$, and $\mW_l \in \R^{h_{l-1} \times h_{l}}$ is the weight matrix for linear transformation at each node at the $l$-th layer, whose parameters are shared across nodes. When $l=d$, $h_d=K$. By setting $\mL$ to be different matrices, we have various GCN-like models.

\textbf{Vanilla GCN} \cite{kipf2017semi}.
When we consider the normalized adjacency matrix as the graph propagation matrix, i.e., $\mL=\Tilde{\mA}_G$, we have the spectral radius $\rho(\mL)=\max_j \Abs{\lambda_j}=1$, and the spectral norm $\Norm{\mL}_2=1$, according to Lemma \ref{lemma:graph-properties} in Appendix.
The normalized adjacency matrix is defined as 
$
    {\Tilde{\mA}_G} = \Tilde{\mD}^{-\frac{1}{2}} (\mI + \mA_G) \Tilde{\mD}^{-\frac{1}{2}}
$
where $\Tilde{\mD}=\mI+\mD$ such that $\mD=\mathrm{diag}(D_1,\dots,D_n)$ with $D_i$ being the degree of node $i$, and $D_{\max},D_{\min}$ being the maximum and minimum degrees across nodes. 
Let $(\lambda_i, \vv_i)$ be eigenpairs of $\mL$, i.e., $\mL \vv_i = \lambda_i \vv_i$ where self-loops remove negative eigenvalues, i.e., $0  \le \lambda_n\le \dots \le \lambda_1 = 1$, and improve numerical stability.
As far as SGC \cite{wu2019simplifying} is concerned, we have $\phi=\mI$ and $\mL=\mA_G$, such that the product of $\{\mW_l\}$ reduces to a single $\mW$.
 
\textbf{Random Walk GCN}. When we consider the random walk as the graph propagation matrix, i.e., $\mL=\mD^{-1}{\mA}_G$, we have $\mL$ a row-stochastic matrix with the sum of each row being 1. As such, $\vone$ is a right eigenvector of $\mL$ with the corresponding eigenvalue $\lambda_1 = 1$. Therefore, the spectral radius $\rho(\mL)=1$, and the eigenvalues lie in the range of $[-1,1]$. 
A GCN-style variant of graphSAGE \cite{hamilton2017inductive} can be represented as $\mL=\Tilde{\mD}^{-1}(\mI + \mA_G)$, a row-stochastic matrix, such that it performs a row-normalized mean over its closed neighborhood. The difference from the random walk GCN is the self-loops, so that we have the spectral radius $\rho(\mL)=1$, the spectral norm $1 \le \Norm{\mL}_2 \le \sqrt{\frac{D_{\max}+1}{D_{\min}+1}}$ according to Lemma \ref{lemma:graph-properties}, and the eigenvalues lie in the range of $[0,1]$, yielding more stable and smoothing behavior.

\subsection{Norm-based Bounds}
\textbf{Margin loss.} For any distribution $\mathcal{D}$ and margin $\gamma>0$, the expected margin loss is defined as
\begin{align}
    L_{\gamma}(f_{\vw}) \triangleq \mathbb{E}_{z \sim \mathcal{D}} \vone\Big( f_{\vw}(\mX)[y] \le \gamma + \max_{j \neq y} f_{\vw}(\mX)[j] \Big)
\end{align}
where $\vone(\cdot)$ is the indicator function. The empirical margin loss $\hat{L}_{\gamma}(f_{\vw})$ is the estimate of $L_{\gamma}(f_{\vw})$ with the average over the training dataset $\mathcal{S}$. When $\gamma=0$, $L_{0}(f_{\vw})$ and $\hat{L}_{0}(f_{\vw})$ denote the expected risk and the training error, respectively.

\textbf{PAC-Bayesian generalization bounds.}  Given a deterministic classifier $f_{\vw} \in \gH$, consider its randomized counterpart $f_{\vw+\vu}$ with the prior distribution $P$ and the corresponding posterior distribution $Q$, where $\vu$ is a random weight perturbation.
With probability at least $1-\delta$, over the i.i.d. training set $\gS$ of size $m$, we have the ``two-sided'' PAC-generalization error bound \cite{mcallester2003simplified,liao2020pac}
\begin{align}
    \E_{\vu} [L_0(f_{\vw+\vu})] &\le \E_{\vu} [\hat{L}_0(f_{\vw+\vu})] \nn \\
    &+ \sqrt{\frac{\KL(\vw+\vu || P) + \ln \frac{2m}{\delta}}{2(m-1)}},
\end{align}
where $\KL(\cdot || \cdot)$ is the KL divergence of two distributions.
For any $\gamma,\delta > 0$, for any $\vw$ and random perturbation $\vu$ subject to the perturbation condition
\begin{align} \label{eq:pert-condition}
    \mathbb{P}_{\vu} [\max_{\vx \in \gX} \|f_{\vw+\vu}(\vx)-f_{\vw}(\vx)\|_{\infty} < \frac{\gamma}{4} ] \ge \frac{1}{2}
\end{align}
with probability at least $1-\delta$, we have the margin-based PAC-Bayesian generalization error bound \cite{mcallester2003simplified,langford2002pac,liao2020pac}
\begin{align}
    L_0(f_{\vw}) \le \hat{L}_{\gamma}(f_{\vw}) + \sqrt{\frac{2\KL(\vw+\vu || P) + \ln \frac{8m}{\delta}}{2(m-1)}}.
\end{align}

Let the random weight perturbation be $\vu=(\vu_1^T,\dots,\vu_d^T)^T$ where $\vu_l=\mathrm{vec}(\mU_l)$ is the weight perturbation at $l$-th layer with $\mU_l$ being the perturbation added to $\mW_l$.
Let $h = \max_l h_l$ for simplicity.
Assuming $\vu \sim \mathcal{N}(\vzero,\sigma^2 \mI)$, for any $B, d, h > 0$, for a $d$-layer neural network with ReLU activation functions, for any $\gamma, \delta > 0$, with probability at least $1-\delta$, we have the following spectrally-normalized margin-based generalization error bound \cite{neyshabur2018pac,liao2020pac}
\begin{align} \label{eq:gen-bound-origin}
    L_0(f_{\vw}) \le \hat{L}_{\gamma}(f_{\vw})   + \mathcal{O} \left( \sqrt{ \frac{B^2 d^2 h \ln(dh) \Phi(\vw) + \ln \frac{dm}{\delta}}{\gamma^2 m}} \right)
\end{align}
with spectral complexity $\Phi(\vw)=\prod_{l=1}^d \|\mW_l\|_2^2 \sum_{l=1}^d \frac{\|\mW_l\|_F^2}{\|\mW_l\|_2^2}$. 

\textbf{PAC-Bayesian bound for GCN.}
The generalization error bound can be extended to the GCNs \cite{liao2020pac,sun2024pac}, taking graph structure into account,\footnote{In \cite{liao2020pac}, there is an additional factor of $D^{d-1}$ to capture graph structure, which is turned out to be unnecessary as stated in \cite{sun2024pac}. In addition, the factor $\Norm{\mL}_2^{d-1}$ can also be removed, when considering $\mL=\Tilde{\mA}_G$ for the graph convolutional layers as $\Norm{\Tilde{\mA}_G}_2=1$, according to Lemma \ref{lemma:graph-properties}.} i.e.,
\begin{align} \label{eq:gen-bound-gcn}
    \MoveEqLeft L_0(f_{\vw}) \le \hat{L}_{\gamma}(f_{\vw}) \nn \\
    &  + \mathcal{O} \left( \sqrt{ \frac{B^2 d^2 h \ln(dh) \Norm{\mL}_2^{2d-2} \Phi(\vw) + \ln \frac{dm}{\delta}}{\gamma^2 m}} \right).
\end{align}
The generalization bound \eqref{eq:gen-bound-gcn} is built upon a key lemma of the GCN perturbation bound in \cite[Lemma 3.1]{liao2020pac} and \cite[Lemma 4.3]{sun2024pac}, where the change of the output due to weight perturbation is upper bounded as
\begin{align} \label{eq:pert-bound-origin}
    \Norm{f_{\vw+\vu}(\mX)-f_{\vw}(\mX)}_2 \le e B \Norm{\mL}_2^{d-1} \prod_{l=1}^d \|\mW_l\|_2 \sum_{l=1}^d \frac{\|\mU_l\|_2}{\|\mW_l\|_2}
\end{align}
when $\|\mU_l\|_2 \le \frac{1}{d}\|\mW_l\|_2$, for any weight perturbation $\vu$ and any bounded input $\mX$ subject to $\Norm{\mX}_{2,\infty}\le B$. 
Notice that $\prod_{l=1}^d \|\mW_l\|_2$ is the multiplicative chain of Lipschitz constants across layers, and $\sum_{l=1}^d \frac{\|\mU_l\|_2}{\|\mW_l\|_2}$ linearly aggregates perturbation sensitivity via a weighted sum of normalized perturbations.

\begin{figure*}
\begin{subequations} \label{eq:main-opt}
\begin{align} 
    \min_{\sigma^2, \mR_l, \mA_l} \quad & \KL(\sigma^2,\mR_l) \triangleq \frac{1}{2} \sum_{l=1}^d\frac{\|\mW_l\|_F^2}{\sigma^2} + \Tr(\mR_l) - \log \det \mR_l - \dim(\mR_l)\label{eq:main-opt-obj}\\
    \mathrm{s.t.} \quad & \mathbb{P}_{\vu_l \sim \gN(\vzero,\sigma^2 \mR_l)} \Big[\sum_{l=1}^d \Norm{\mA_l \vu_l}_2^2 < \frac{\gamma^2}{16} \Big] \ge \frac{1}{2},\label{eq:perb-cond}\\
    &\|f_{\vw+\vu}(\vx)-f_{\vw}(\vx)\|_{\infty}^2 \le \sum_{l=1}^d\|\mA_l \vu_l\|_2^2, \label{eq:perb-bound}
\end{align}
\end{subequations}
 \hrule
\end{figure*} 

\subsection{A Unified Framework}
To address the issues of isotropically Gaussian distributed weight perturbation and the reliance on the concentration inequality of matrix spectral norm, a unified framework has been proposed in \cite{yi2026unified} by introducing two auxiliary variables.
First, consider an anisotropic weight perturbation, e.g., $\vu \sim \mathcal{N}(0,\sigma^2 \mR)$ to capture different sensitivities to weight perturbations, where $\mR$ possesses a block-diagonal structure, i.e., $\mR = \mathrm{blkdiag}(\mR_1, \dots, \mR_d)$ with each diagonal block $\mR_l$ up to optimization.
Then, introduce a sensitivity matrix $\mA = \mathrm{blkdiag}(\mA_1, \dots, \mA_d)$ with $\mA_l$ being carefully designed for each layer $l$, to encode different network structures, so as to impose different sensitivities on the perturbation to the output, and enable the flexible designs of perturbation bounds.

It is worth noting that, the block-diagonal structure is only imposed on the random weight perturbations $\{\vu_l\}_{l=1}^d$, but not the model weights $\{\vw_l\}_{l=1}^d$ that can be correlated across layers. It is because $\{\vu_l\}_{l=1}^d$ serves as a probabilistic probe of network sensitivity, so that its design can be arbitrary to align with the model's sensitivity geometry.
Such an inductive bias of $\vu$ offers us interpretable generalization bounds, although the block-diagonal structures might render itself less thoroughly probing the network's sensitivity across layers, and possibly yield not sufficiently tight bounds.

In doing so, the derivation of the generalization bound can be done by solving the stochastic optimization problem on the top of the next page.
Note here that the $\ell_\infty$ norm is upper-bounded by the $\ell_2$ norm in such a way that proper concentration inequalities of the $\ell_2$ norm ensure a closed-form solution $\mR_l$ with respect to $\mA_l$. Here $\mA_l$ is referred to as a Jacobian-like sensitivity matrix, which is a function of the weights and the inputs. The goal is to design proper sensitivity matrices to obtain tighter and interpretable generalization bounds.
Directly optimizing \eqref{eq:main-opt} is intractable, as it involves the perturbation condition in a probabilistic form.
Therefore, \cite{yi2026unified} proposed to take the following steps to design $\mA_l$, $\sigma^2$, and $\mR_l$ in parallel.
\subsubsection{Design sensitivity matrix to satisfy perturbation bound} By bounding the output difference with respect to weight perturbation $\{\vu_l\}_{l=1}^d$, we design proper choices of sensitivity matrices $\{\mA_l\}_{l=1}^d$ to satisfy the perturbation bound in \eqref{eq:perb-bound}.
There are two possible ways revealed in \cite{yi2026unified} to find proper choices of $\mA_l$: one is from the first-order Taylor expansion of $f_{\vw+\vu}(\vx)-f_{\vw}(\vx)$ as a function of the Jacobian of the weights and the weight perturbation $\vu$; the other one is to relate it to the perturbation bound derived in \cite{neyshabur2018pac} and widely used in the literature (e.g., \cite{liao2020pac,sun2024pac}). As such, we could end up with various designs of $\{\mA_l\}_{l=1}^d$ as a function of weights $\{\mW_l\}_{l=1}^d$ and some imposed biases related to network structures, e.g., diagonal, residual, low-rank, circulant, and Toeplitz.

\subsubsection{Figure out the prior variance with concentration inequalities}
Next, we can further leverage the concentration inequalities for $\ell_2$ norm of vectors, e.g., the Hanson-Wright inequality \cite{rudelson2013hanson} that bounds the quadratic form of sub-Gaussian random vectors, to figure out the proper choices of $\sigma^2$, so that the perturbation condition in \eqref{eq:perb-cond} is satisfied with probability at least $\frac{1}{2}$.
As such, $\Norm{\sum_l \mA_l \vu_l}_2^2$ can be concentrated on its mean with a deviation term controlled via its Frobenius and $\ell_2$ norms, i.e.,
with probability at least $\frac{1}{2}$, it holds
\begin{align}
     \sum_{l=1}^d \Norm{\mA_l \vu_l}_2^2
     &\le \sigma^2 \kappa \sum_{l=1}^d \Tr(\mA_l \mR_l \mA_l^T) \le \frac{\gamma^2}{16}
\end{align}
where $\kappa=1+2\ln2+\sqrt{4\ln2}$ comes from the upper bounds of its deviation terms of Frobenius and $\ell_2$ norms.
By setting 
\begin{align} \label{eq:sigma-diag}
\frac{1}{\sigma^2} = 
\frac{16 \kappa}{\gamma^2} \sum_{l=1}^d \Tr(\mA_l \mR_l \mA_l^T), 
\end{align}
the original perturbation condition can be satisfied.

However, the prior variance $\sigma^2$ should not depend on the trained weights $\{\mW_l\}_{l=1}^d$, yet the sensitivity matrices $\{\mA_l\}$ are inevitably involving weight matrices. To resolve such an issue, we follow the tricks proposed in \cite{neyshabur2018pac} to consider the spectral-normalized weights, i.e., replacing $\beta=\Norm{\mW_l}_2$ for all $l$ with its estimate $\hat{\beta}=\Norm{\hat{\mW}_l}_2$ subject to $\Abs{\beta-\hat{\beta}} \le \frac{1}{d}\beta$, in such a way that $\sigma^2$ is chosen as a function of $\{\hat{\mW}_l\}_{l=1}^d$ instead.

\subsubsection{Optimize the posterior covariance matrix to minimize the KL divergence}
To figure out the optimal $\mR_l^*$ as a function of $\gamma$ and $\mA_l$, by KKT conditions, we have
\begin{align} \label{eq:opt-Rl}
    \mR_l^* = \left(\mI + \frac{16 \kappa\Norm{\vw}_2^2}{\gamma^2} \mA_l^T \mA_l  \right)^{-1}.
\end{align}

\subsubsection{Upper-bound the KL divergence} Given a fixed $\hat{\beta}$, for all $\beta$ such that $\Abs{\beta-\hat{\beta}} \le \frac{1}{d}\beta$, with the optimized posterior covariance $\{\mR_l\}_{l=1}^d$ and the prior variance $\sigma^2$, we are able to derive an upper bound of the KL divergence term in \eqref{eq:main-opt-obj}, and therefore the generalization error bounds. Finally, by taking a union bound over all chosen $\hat{\beta}$, we end up with the final generalization bound for any weights.

Such a unified framework allows us to probe the sensitivity of neural networks at hand by flexible designs of sensitivity matrices, in such a way that the generalization bounds measure how sensitive certain network architectures are to the generalization performance. This gives us a flexible and powerful tool to inspect DNN's generalization.

\section{PAC-Bayesian Generalization Bounds}

In what follows, we extend this unified framework to graph neural networks for graph classification and obtain two generalization error bounds with spatial and spectral sensitivity matrix designs that account for graph structures.

\begin{theorem}\label{thm:generalization-bounds-spatial}
Consider a GCN-type model $f_{\vw}: \gX \times \gG \to \R^K$ with the graph diffusion matrix $\mL \in \R^{n \times n}$ and ReLU activations, depth $d$, width at most $h$, and bounded inputs $\Norm{\mX}_{2,\infty} \le B$.
For any $\delta,\gamma>0$, with probability at least $1-\delta$, for any $\vw$, the generalization error over an i.i.d. training set of size $m$ can be upper-bounded by

\begin{small}    
    \begin{align}
    L_0(f_{\vw}) \le \hat{L}_{\gamma}(f_{\vw}) + \gO \left( \sqrt{ \frac{ B^2 d^2K \frac{1}{n} \Norm{\mL^{d-1}\vone}_2^2\Phi(\vw) + \ln \frac{dm}{\delta}}{\gamma^2m}} \right)
    \end{align}
\end{small}

where $\Phi(\vw)=\prod_l \Norm{\mW_l}_2^2  \sum_{l=1}^d \frac{\Norm{\mW_l}_F^2}{\Norm{\mW_l}_2^2}$.
\end{theorem}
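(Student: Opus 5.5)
The plan is to instantiate the four-step unified framework of \cite{yi2026unified} with a \emph{spatial} sensitivity matrix. The new ingredient is a perturbation bound in which $\Norm{\mL}_2^{d-1}$ and $\sqrt{n}B$ are replaced by the aggregation profile $\frac{1}{\sqrt n}\Norm{\mL^{d-1}\vone}_2$.

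\textbf{Step 1: node-wise perturbation bound.} Instead of controlling $\mH_l$ through spectral norms of whole matrices, I would track a nonnegative vector of row norms. Let $\va^{(l)}\in\R^n$ have entries $\va^{(l)}_i=\Norm{[\mH_l]_{i,:}}_2$, so $\va^{(0)}\le B\vone$ entrywise. Since $\phi$ is 1-Lipschitz with $\phi(0)=0$, and $\Norm{[\mL\mH\mW]_{i,:}}_2\le \sum_j \Abs{\mL_{ij}}\Norm{\mH_{j,:}}_2\Norm{\mW}_2$, induction gives $\va^{(l)}\le B\prod_{k\le l}\Norm{\mW_k}_2\,\Abs{\mL}^l\vone$. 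For a nonnegative $\mL$ (which covers $\Tilde{\mA}_G$ and the random-walk operators) we have $\Abs{\mL}=\mL$. For the perturbed network I would bound the difference rows $\Delta_l=\mH_l'-\mH_l$ by the same telescoping argument as in \cite{neyshabur2018pac,liao2020pac}:
\[
\Norm{[\Delta_l]_{i,:}}_2\le [\mL(\text{rows of }\Delta_{l-1})]_i\Norm{\mW_l+\mU_l}_2+[\mL\va^{(l-1)}]_i\Norm{\mU_l}_2 .
\]
This yields the vector bound $(\Norm{[\Delta_{d-1}]_{i,:}}_2)_i\le eB\prod_l\Norm{\mW_l}_2\sum_{l<d}\frac{\Norm{\mU_l}_2}{\Norm{\mW_l}_2}\,\mL^{d-1}\vone$ when $\Norm{\mU_l}_2\le\frac1d\Norm{\mW_l}_2$. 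The readout is $\frac1n\vone^T\Delta_{d-1}\mW_d$ plus the $\mU_d$ term. Its norm is at most $\frac1n\vone^T(\text{row norms})$. By Cauchy--Schwarz this is at most $\frac{1}{\sqrt n}\Norm{\mL^{d-1}\vone}_2$ times the scalar factor, and the $\mU_d$ term contributes $\frac1n\vone^T\va^{(d-1)}\Norm{\mU_d}_2$, which is bounded the same way.

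\textbf{Step 2: sensitivity matrix.} The perturbation condition \eqref{eq:perb-bound} needs a bound of the form $\sum_l\Norm{\mA_l\vu_l}_2^2$. I would square the Step 1 bound and apply $(\sum_l x_l)^2\le d\sum_l x_l^2$, then use $\Norm{\mU_l}_2\le\Norm{\mU_l}_F=\Norm{\vu_l}_2$. This suggests the isotropic-within-layer choice
\[
\mA_l=c_l\mI,\qquad c_l^2=\frac{de^2B^2}{n}\Norm{\mL^{d-1}\vone}_2^2\frac{\prod_k\Norm{\mW_k}_2^2}{\Norm{\mW_l}_2^2}.
\]
I would instead use the $\ell_\infty\le\ell_2$ bound on the $K$ outputs with a per-coordinate Gaussian argument, so that $\Tr(\mA_l\mR_l\mA_l^T)$ scales with $K$ rather than $h^2$. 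This accounts for the factor $K$ in the theorem; I expect to have to restrict $\mA_l$ to act on output coordinates to make it appear.

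\textbf{Steps 3--4: variance, posterior and KL.} Plug into \eqref{eq:sigma-diag} and the optimal $\mR_l^*$ of \eqref{eq:opt-Rl}. For scalar $\mA_l$, $\mR_l^*$ is a scaled identity, and $\Tr\mR-\log\det\mR-\dim$ is nonnegative and dominated by the $\frac{\Norm{\mW_l}_F^2}{2\sigma^2}$ term. The KL therefore becomes $\gO\big(\frac{B^2d^2K\frac1n\Norm{\mL^{d-1}\vone}_2^2\prod_l\Norm{\mW_l}_2^2}{\gamma^2}\sum_l\frac{\Norm{\mW_l}_F^2}{\Norm{\mW_l}_2^2}\big)$, which equals the stated $\Phi(\vw)$ term. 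The $\sigma$-dependence on the weights is handled by normalizing to $\beta=(\prod\Norm{\mW_l}_2)^{1/d}$, using the grid $\hat\beta$ with $\Abs{\beta-\hat\beta}\le\beta/d$ (absorbing an $e^2$ factor), and taking the union bound over $\gO(dm)$ grid values, which gives $\ln\frac{dm}{\delta}$. The trivial regime where the bound exceeds 1 is discarded as in \cite{neyshabur2018pac}.

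\textbf{Main obstacle.} The hardest step is Step 1: obtaining the entrywise vector recursion through $\mL$ without losing $\Norm{\mL}_2$. This is where nonnegativity of $\mL$ is essential, and I would state it as an assumption, or else replace $\mL$ by $\Abs{\mL}$. A secondary obstacle is justifying $K$ rather than $h$ in the trace term via the sensitivity design.
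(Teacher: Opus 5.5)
\textbf{The gap is the factor $K$.} Your Step~1 is a sound finite-perturbation bound for entrywise nonnegative $\mL$, and your caveat there is justified: for a signed $\mL$ with $\mL\vone=\vzero$, the factor $\Norm{\mL^{d-1}\vone}_2$ vanishes although the ReLU network is still sensitive to $\mU_d$. But nothing in Steps~1--2 produces $K$; Steps~3--4 simply assert it.

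Once the bound goes through $\Norm{\mU_l}_2 \le \Norm{\vu_l}_2$, your sensitivity matrix is $\mA_l = c_l \mI_{h_{l-1}h_l}$. Then $\Tr(\mA_l\mA_l^T) = c_l^2 h_{l-1}h_l$, and the KL term is of order $\frac{B^2 d^2 h^2}{\gamma^2 n}\Norm{\mL^{d-1}\vone}_2^2\Phi(\vw)$. This is exactly the paper's diagonal-case bound \eqref{eq:gene-bounds-diag}, with $h^2$ rather than $K$.

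This is not an artifact of choosing a scalar $\mA_l$. Any positive semidefinite $\bm{M}$ with $\vu_l^T\bm{M}\vu_l \ge \Norm{\mU_l}_2^2$ for all $\mU_l$ has $\Tr(\bm{M}) \ge h_{l-1}h_l$: test on rank-one $\mU_l$ with unit factors and average over random directions. So every $\mA_l$ certified by your Step~1 has $\Tr(\mA_l\mA_l^T) \ge c_l^2 h_{l-1}h_l$. A per-coordinate Gaussian argument does not rescue this either. For a hidden layer $l<d$, every output coordinate depends on all of $\mU_l$ through $\Delta_{d-1}$, and your recursion sees $\mU_l$ only through its operator norm.

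\textbf{The missing idea} is that $\vu_l$ must enter the perturbation bound through a linear map into the $K$-dimensional output space. The paper gets this as follows.
\begin{enumerate}
\item It uses the first-order expansion $f_{\vw+\vu}(\mX)-f_{\vw}(\mX) \approx \sum_l \mJ_l\vu_l$ with $\mJ_l \in \R^{K\times h_{l-1}h_l}$ (Lemma~\ref{lemma:jacobian-matrix}).
\item It dominates $\mJ_l^T\mJ_l \preceq \frac{1}{d}\mA_l^T\mA_l$ by the Jacobian-type matrix $\mA_l = \mN_l^T \otimes \vv_l^T$, where $\mN_l^T = \frac{\sqrt{d}}{n}\prod_{k=d}^{l+1}\mW_k^T$ and $\vv_l^T = \vone^T\mL^{d-1}\mX\prod_{k=1}^{l-1}\mW_k$.
\item It uses $\mA_l\mA_l^T = \Norm{\vv_l}_2^2\,\mN_l^T\mN_l \preceq \Norm{\vv_l}_2^2\Norm{\mN_l}_2^2\,\mI_K$.
\end{enumerate}
Hence $\Tr(\mA_l\mA_l^T) \le K\Norm{\vv_l}_2^2\Norm{\mN_l}_2^2 \le \frac{dKB^2}{n}\Norm{\mL^{d-1}\vone}_2^2\prod_{k\neq l}\Norm{\mW_k}_2^2$, and the Hanson--Wright and KL steps carry this $K$ through. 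So $K$ comes from $\Norm{\mA_l}_F^2 \le K\Norm{\mA_l}_2^2$ for a matrix with $K$ rows, not from the $\ell_\infty\le\ell_2$ step.

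The price of that route is the linearization: the $o(\cdot)$ remainder and the ReLU masks $\mB_k$ are discarded, which your row-norm recursion avoids. As written, though, your plan proves the $h^2$ (diagonal) version of the bound, not the stated theorem.
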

\begin{proof}
    See Section \ref{proof:generalization-bounds-spatial}.
\end{proof}
\begin{remark} \upshape
Besides the spectral complexity of weight matrices, i.e., $\Phi(\vw)$, we have an additional term $\Norm{{\mL}^{d-1}\vone}_2^2$ to capture the graph structures for generalization. Specifically, 
it measures the total energy of the propagated constant signal over $d$ graph convolutional layers. If it grows with $d$, the graph operator amplifies constant signals, potentially leading to instability or over-smoothing. If it decays, the operator suppresses constant signals, which may help preserve node distinctions.

As $\Norm{{\mL}^{d-1}\vone}_2^2 \le n\Norm{{\mL}^{d-1}}_2^2 \le n\Norm{{\mL}}_2^{2d-2}$ always holds, the generalization bound in Theorem \ref{thm:generalization-bounds-spatial} is strictly tighter than the state-of-the-art ones by at least a factor of $\ln(dh)$. 
Notably, for some specific designs of GCNs, we could obtain strictly much tighter bounds.
For the vanilla GCN, $\mL$ is the normalized adjacency matrix, so that, as in Lemma \ref{lemma:graph-properties}, we have 
\begin{align}
    \left( \frac{\sum_i \sqrt{D_{i}}}{\sqrt{\sum_i D_{i}}}\right)^2 \le \Norm{{\mL}^{d-1}\vone}_2^2 \le n,
\end{align}
where the last equality holds when considering regular graphs that yield the loosest bound with $\Norm{{\mL}^{d-1}\vone}_2^2=n$. When graphs are irregular, we have the tightest bound which is node degree dependent. Therefore, the scaling factor is at most $\gO(d^2K \Phi(\vw))$, which is strictly tighter than $\gO(d^2h\ln(dh) \Phi(\vw))$ in \cite{liao2020pac,sun2024pac}.

For the random walk GCN with self-loops, as $\mL\vone=\vone$, we have $\mL^{d-1}\vone=\vone$, which is irrelevant to $d$. That is, $\Norm{{\mL}^{d-1}\vone}_2^2=n$ for any $d$. 
It implies that, regardless of the depth $d$, the sum of the incoming ``influence'' to each node remains uniformly constant at 1, indicating a measure-preserving diffusion. As such, the generalization bound scaling as $\gO(d^2K \Phi(\vw))$, which is still tighter than the existing bounds in \cite{liao2020pac,sun2024pac} with scaling factor of $\gO(d^2h \ln(dh) \Norm{\mL}_2^{2d-2} \Phi(\vw))$, given that $1 \le \Norm{\mL}_2 \le \sqrt{\frac{D_{\max}+1}{D_{\min}+1}}$ according to Lemma \ref{lemma:graph-properties}.

\end{remark}

\begin{theorem}\label{thm:generalization-bounds-spectral}
Consider a GCN-type model $f_{\vw}: \gX \times \gG \to \R^K$ with ReLU activations, graph spectra $\{\lambda_i\}_{i=1}^n$, depth $d$, width at most $h$, and bounded inputs $\Norm{\mX}_{2,\infty} \le B$.
For any $\delta,\gamma>0$, with probability at least $1-\delta$, for any $\vw$, the generalization error over an i.i.d. training set with size $m$ can be upper-bounded by
    \begin{align}
    L_0(f_{\vw}) \le \hat{L}_{\gamma}(f_{\vw}) + \gO \left( \sqrt{ \frac{ B^2 d^2h \Phi^{\mathrm{sp}}(\vw) + \ln \frac{dm}{\delta}}{\gamma^2m}} \right)
    \end{align}
    with the graph spectral complexity $\Phi^{\mathrm{sp}}(\vw) = \prod_{l=1}^d \Norm{{\mW}_l}_2^2 \sum_{l=1}^d g_l^2(\lambda) \frac{\Norm{{\mW}_l}_F^2}{\Norm{{\mW}_l}_2^2}$ and the spectral sensitivity function $g_l(\lambda)=\max_i \Abs{\lambda_i^{d-l-1}} \max_j \Abs{\psi(\lambda_j)\lambda_j^{l-1}}$ with $\psi(\cdot)$ being graph filters up to design.
\end{theorem}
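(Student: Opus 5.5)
We follow the four-step recipe of the unified framework, now with a spectral sensitivity matrix. Write $\mL=\mV\mLambda\mV^T$ with $\mV=[\vv_1,\dots,\vv_n]$ orthonormal; this holds for the symmetric vanilla GCN operator, and for the random-walk variants we use the similarity transform to a symmetric matrix, absorbing the condition number into constants. A graph filter $\psi(\mL)=\mV\psi(\mLambda)\mV^T$ is then inserted in the readout path, where $\psi$ is a design choice that only reshapes how the output responds to weight perturbations.

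\textbf{Step 1: a spectral perturbation bound.} We telescope the output difference over layers. The standard argument of \cite[Lemma 3.1]{liao2020pac} and \cite[Lemma 4.3]{sun2024pac} gives, under $\Norm{\mU_l}_2\le\frac1d\Norm{\mW_l}_2$ and up to the factor $e$, the contribution of $\mU_l$:
\begin{align}
\frac1n\Norm{\vone^T\mL^{d-l}\,\mathrm{diag\text{-}masks}\,\mL\mH_{l-1}\mU_l\cdots}_2 .
\end{align}
We then split the propagation as follows. The $l-1$ diffusions before layer $l$ act on $\mX$, with $\Norm{\mX}_{2,\infty}\le B$. The $d-l-1$ diffusions after layer $l$ act on the perturbation path. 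Projecting onto the eigenbasis, the pre-layer part is controlled by $\max_j\Abs{\psi(\lambda_j)\lambda_j^{l-1}}$ and the post-layer part by $\max_i\Abs{\lambda_i^{d-l-1}}$. The key tool is that 1-Lipschitz ReLU masks are diagonal contractions, so the spectral norms of the filtered powers pass through them. Combining these gives
\begin{align}
\Norm{f_{\vw+\vu}-f_{\vw}}_\infty^2\le \sum_l\Norm{\mA_l\vu_l}_2^2
\end{align}
with the choice
\begin{align}
\mA_l = c\,B\,g_l(\lambda)\prod_{k\ne l}\Norm{\mW_k}_2\;\mI_{h_{l-1}h_l}\cdot\sqrt d .
\end{align}
The $\sqrt d$ factor comes from Cauchy–Schwarz on the sum over $l$, and $\Norm{\mU_l}_2\le\Norm{\mU_l}_F=\Norm{\vu_l}_2$.

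\textbf{Step 2: choose the prior variance.} With $\mA_l$ a scaled identity, the Hanson–Wright step gives \eqref{eq:sigma-diag} with
\begin{align}
\Tr(\mA_l\mR_l\mA_l^T)\le a_l^2\,h_{l-1}h_l .
\end{align}
The $h^2$ here is reduced to a single $h$ in the final rate after normalization, as in \cite{neyshabur2018pac}.

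\textbf{Step 3: optimize the posterior.} Here $\mR_l^*$ from \eqref{eq:opt-Rl} is again a scaled identity. Bounding the KL divergence by dropping the nonpositive $\Tr(\mR_l)-\log\det\mR_l-\dim$ terms, or by using $\mR_l=\mI$ as a feasible point, yields
\begin{align}
\KL\lesssim \frac{B^2d^2h}{\gamma^2}\prod_l\Norm{\mW_l}_2^2\sum_l g_l^2(\lambda)\frac{\Norm{\mW_l}_F^2}{\Norm{\mW_l}_2^2}=\frac{B^2d^2h}{\gamma^2}\Phi^{\mathrm{sp}}(\vw).
\end{align}
To reach this form we first rebalance the weights so that all $\Norm{\mW_l}_2=\beta$, which ReLU homogeneity permits. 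Then $\prod_{k\ne l}\Norm{\mW_k}_2^2\,\Norm{\mW_l}_F^2=\prod_k\Norm{\mW_k}_2^2\,\Norm{\mW_l}_F^2/\Norm{\mW_l}_2^2$.

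\textbf{Step 4: make the prior weight-independent.} We replace $\beta$ by a grid estimate $\hat\beta$ with $\Abs{\beta-\hat\beta}\le\beta/d$, which costs only a constant. A union bound over $\gO(dm)$ grid points produces the $\ln\frac{dm}{\delta}$ term, and the margin-based PAC-Bayes bound then finishes the proof.

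The main obstacle is Step 1. Eigen-projections do not commute with the nonlinear ReLU masks, so separating the pre- and post-layer spectral factors into $g_l(\lambda)$ requires bounding each masked segment by the spectral norm of the corresponding filtered power, $\Norm{\psi(\mL)\mL^{l-1}}_2=\max_j\Abs{\psi(\lambda_j)\lambda_j^{l-1}}$. This relies on symmetry of $\mL$ and on placing $\psi$ adjacent to the input-side powers. If that placement fails, the fallback is to bound the masked segments by $\Norm{\mL}_2$ powers and accept $g_l$ as a spectral-norm surrogate.
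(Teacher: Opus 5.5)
As written, the proposal does not prove the statement. There are two genuine gaps, plus a few smaller issues.

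\textbf{How $\psi$ enters.} The missing idea is a domination condition on the filter: $\Abs{\psi(\lambda_j)}\ge\Abs{\lambda_j}$ for every eigenvalue of $\mL$. In the paper, $\psi$ never touches the network. It only replaces the layer-$l$ diffusion inside the sensitivity matrix $\mA_l=\alpha_l(\mI_{h_l}\otimes\psi(\mL)\mH_{l-1})$. Because $\psi(\mL)=\mV\psi(\mLambda)\mV^T$ shares eigenvectors with $\mL$, domination gives $\mL^2\preceq\psi(\mL)^2$, and hence $\mJ_l^T\mJ_l\preceq\frac1d\mA_l^T\mA_l$ for $\mJ_l=\mG_l\mB_l(\mI_{h_l}\otimes\mL\mH_{l-1})$.

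Your ``insert $\psi(\mL)$ in the readout path'' changes the model instead. Nothing in your Step 1 explains why $\max_j\Abs{\psi(\lambda_j)\lambda_j^{l-1}}$ should control the actual network's response. Without domination the conclusion is false: take $\psi\equiv0$, which gives $\Phi^{\mathrm{sp}}(\vw)=0$.

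Once domination is imposed, your ``main obstacle'' disappears. Evaluating at an eigenvalue with $\Abs{\lambda_j}=\rho(\mL)$ gives $g_l(\lambda)\ge\rho(\mL)^{d-1}$, which equals $\Norm{\mL}_2^{d-1}$ for symmetric $\mL$. Hence $\Phi^{\mathrm{sp}}(\vw)\ge\rho(\mL)^{2d-2}\Phi(\vw)$, and it suffices to prove the bound with $\rho(\mL)^{2d-2}\Phi(\vw)$. For that bound the ReLU masks only need $\Norm{\phi(\mZ)}_F\le\Norm{\mZ}_F$ and Lemma~\ref{lemma:kron-norm}. (The paper's own step $\Norm{\psi(\mL)\mH_{l-1}}_F\le\Norm{\psi(\mL)\mL^{l-1}\mX}_F\prod_{k<l}\Norm{\mW_k}_2$ moves $\psi(\mL)$ past the ReLUs exactly as you feared, so this reduction is the safer route.) The reduction also sidesteps your Step 3. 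With a single prior variance, the KL term is proportional to $\Norm{\vw}_2^2\sum_l g_l^2\prod_{k\ne l}\Norm{\mW_k}_2^2$, not to a sum pairing $g_l^2$ with $\Norm{\mW_l}_F^2$, and rebalancing alone does not convert one into the other.

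\textbf{The width factor.} Your relaxation $\Norm{\mU_l}_2\le\Norm{\mU_l}_F$ makes $\mA_l$ a scaled identity on $\R^{h_{l-1}h_l}$, so $\Tr(\mA_l\mA_l^T)=a_l^2h_{l-1}h_l$. The Hanson--Wright choice of $\sigma^2$ (using $\mR_l\preceq\mI$) then scales as $1/\sigma^2\propto h^2$, and the KL term carries $h^2$. This has the same structure as the diagonal design in the proof of Theorem~\ref{thm:generalization-bounds-spatial}, which ends with $B^2d^2h^2$.

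Rebalancing layer norms cannot change the width exponent. The $h\ln(dh)$ in \cite{neyshabur2018pac} comes from the Gaussian spectral-norm concentration $\Norm{\mU_l}_2\lesssim\sigma\sqrt{h\ln(dh)}$. Your Frobenius step discards that, and keeping it would still leave a $\ln(dh)$ factor absent from the theorem.

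The paper gets a single $h$ by keeping $\mU_l$ attached to the features. The layer-$l$ term $\Norm{\mJ_l\vu_l}_2$ is dominated by $\frac{1}{\sqrt d}\Norm{\mA_l\vu_l}_2=\frac{\alpha_l}{\sqrt d}\Norm{\psi(\mL)\mH_{l-1}\mU_l}_F$, with the Kronecker-structured $\mA_l$ above. Therefore $\Tr(\mA_l^T\mA_l)=\alpha_l^2h_l\Norm{\psi(\mL)\mH_{l-1}}_F^2$, and the second factor carries no width: it is controlled by $\Norm{\mX}_F^2\le nB^2$ and spectral norms. The $1/n^2$ in $\alpha_l^2$ cancels against $\Norm{\vone^T\mL^{d-l-1}}_2^2\le n\rho(\mL)^{2d-2l-2}$ and $nB^2$, leaving $dh_lB^2g_l^2(\lambda)\prod_{k\ne l}\Norm{\mW_k}_2^2$. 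With the identity filter this is the sensitivity matrix of Lemma~\ref{lemma:GCN-pert-bound-new}. Combined with the reduction above, it yields the stated bound within the same framework.

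Two smaller points. First, $\Tr(\mR_l)-\log\det\mR_l-\dim(\mR_l)$ is nonnegative, not nonpositive, so it cannot simply be dropped. Your fallback $\mR_l=\mI$ works, as does Lemma~\ref{lemma:fun-properties}, which is what the paper uses. Second, the telescoping bound you borrow from \cite{liao2020pac} needs $\Norm{\mU_l}_2\le\frac1d\Norm{\mW_l}_2$ on the same probability-$\frac12$ event, and you never verify this. The paper's first-order Jacobian expansion avoids that condition, at the price of an unquantified Taylor remainder.
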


\begin{proof}
    See Section \ref{proof:generalization-bounds-spectral}.
\end{proof}

\begin{remark}\upshape
    In terms of scaling laws, compared with the generalization bounds in \cite{liao2020pac,sun2024pac}, i.e., $\gO(d^2h\ln(dh) \Phi(\vw))$, Theorem \ref{thm:generalization-bounds-spectral} has an order of $\gO(d^2h \Phi^{\mathrm{sp}}(\vw))$, which is smaller with respect to $d$ and $h$, yet $\Phi^{\mathrm{sp}}(\vw)$ depends on the choices of the spectral sensitivity function $g_l(\lambda)$. In doing so, the generalization bounds are topology-aware in the sense that different designs of graph filters help us probe network sensitivities to inspect possible over-smoothing or over-squashing for homophilic or heterophilic graphs.  
    By choosing the graph filter $\psi(\lambda)$ to be an identity, a low-pass and a high-pass filters, we have different designs of $g_l(\lambda)$ with the same scaling as $\rho(\mL)^{d-1}$, which is similar to those in \cite{liao2020pac,sun2024pac}. Nevertheless, $g_l(\lambda)$ has different spectral behaviors against the graph spectra $\lambda$, in such a way that $\Phi^{\mathrm{sp}}(\vw))$ reflects different generalization behavior for different graph neural networks with over-smoothing/over-squashing and/or homophily/heterophily properties.
\end{remark}

\section{Proofs}
To prove Theorems \ref{thm:generalization-bounds-spatial} and \ref{thm:generalization-bounds-spectral}, we start with the derivation of the Jacobian matrix, leveraging the first-order Taylor expansion of network outputs as a function of the Jacobian of the weights in the unified framework.

Define $\mJ_l = \frac{\partial f_{\vw}}{\partial \mathrm{vec}(\mW_l)}$ as the Jacobian matrix of the network output with respect to the weight matrix at the $l$-th layer.
For the readout layer, we have
\begin{align}
    \mJ_d =  \frac{1}{n} \mI_{h_d} \otimes (\vone^T \mH_{d-1} ) \in \R^{h_d \times h_{d-1}h_d}.
\end{align}
Define $\mG_l=\frac{\partial f}{\partial \vec(\mH_l)} \in \R^{K \times nh_l}$ as the Jacobian matrix of the output with respect to the graph embeddings at the $l$-th layer. According to Lemma \ref{lemma:jacobian-matrix} in Appendix, for the graph convolutional layer-$l$, we have
\begin{align} \label{eq:jacobian-matrix}
    \mJ_l = \mG_l \mB_l \big(\mI_{h_l} \otimes ({\mL}\mH_{l-1})\big)
\end{align}
with $\mH_{l-1}=\phi({\mL} \mH_{l-2} \mW_{l-1})$, where 
\begin{align} \label{eq:G_l}
   \mG_l = \frac{1}{n}\big(\mW_d^T \otimes \vone^T\big) \left(\prod_{k=d-1}^{l+1} \mB_k(\mW_k^T \otimes {\mL})\right)
\end{align}
and the diagonal matrix $\mB_l=\mathrm{diag}(\mathrm{vec}(\phi'({\mL} \mH_{l-1}\mW_l)) \in \R^{nh_l \times nh_l}$, with $\phi'$ being the derivative of the activation function. When $\phi$ is ReLU, then the diagonal elements in $\mB_l$ is either 0 or 1, yielding $\mB_l \preceq \mI$.
Therefore we have
\begin{align} \label{eq:jacobian-jacobian}
    \mJ_l^T \mJ_l &= \big(\mI_{h_l} \otimes (\mH_{l-1}^T{\mL})\big) \mB_l \mG_l^T \mG_l \mB_l \big(\mI_{h_l} \otimes ({\mL}\mH_{l-1})\big).
\end{align}

In what follows, we present two types of sensitivity matrix design approaches. One is from a spatial perspective to consider a similar design of $\mA_l$ as the Jacobian matrix $\mJ_l$, and the other is from a spectral viewpoint to consider different graph filters.

\subsection{Spatial Designs and Proof of Theorem \ref{thm:generalization-bounds-spatial}}
\label{proof:generalization-bounds-spatial}
\subsubsection{Perturbation bound}
For the Jacobian matrix in \eqref{eq:jacobian-jacobian}, given the fact that $\mB_l \preceq \mI$, we have 
\begin{align}
    \mJ_l^T \mJ_l &= 
    \Big(\mI_{h_l} \otimes  ( \mH_{l-1}^T {\mL}^T)\Big) \mB_l \left(\prod_{k=l+1}^{d-1} (\mW_k \otimes {\mL}^T)\mB_k^T\right) \nn \\
    & \qquad \cdot \frac{1}{n^2}\big(\mW_d \mW_d^T \otimes \vone\vone^T\big)\left(\prod_{k=d-1}^{l+1} \mB_k(\mW_k^T \otimes {\mL})\right)\nn\\
    & \qquad \qquad \qquad\qquad \cdot\mB_l \Big(\mI_{h_l} \otimes  ({\mL} \mH_{l-1})\Big)\\
    &\preceq \frac{1}{n^2} \big( \prod_{k=d}^{l+1} \mW_k^T \big) \big( \prod_{k=l+1}^{d} \mW_k \big) \nn \\
    &\qquad \qquad \otimes \mH_{l-1}^T {\mL}^{d-l} \vone \vone^T {\mL}^{d-l} \mH_{l-1}\\
    & \preceq \frac{1}{n^2} \big( \prod_{k=d}^{l+1} \mW_k^T \big) \big( \prod_{k=l+1}^{d} \mW_k \big) \nn \\
    &\qquad \otimes \prod_{k=l-1}^1 \mW_k^T \mX^T {\mL}^{d-1} \vone \vone^T {\mL}^{d-1} \mX \prod_{k=1}^{l-1} \mW_k
\end{align}
Let
\begin{align}
   \mA_l &= \frac{\sqrt{d}}{n}\big( \prod_{k=d}^{l+1} \mW_k^T \big) \otimes (\vone^T {\mL}^{d-1} \mX \prod_{k=1}^{l-1} \mW_k).
\end{align}
It is readily to verify that $\mJ_l^T \mJ_l \preceq \frac{1}{d}\mA_l^T \mA_l$, where
\begin{align}
    \mA_l^T \mA_l &= \mN_l \mN_l^T \otimes \vv_l \vv_l^T\\
     \mA_l \mA_l^T &=  \Norm{\vv_l}_2^2 \mN_l^T \mN_l  
\end{align}
with $\mN_l^T = \frac{\sqrt{d}}{n}\prod_{k=d}^{l+1} \mW_k^T$ and $\vv_l^T=\vone^T {\mL}^{d-1} \mX \prod_{k=1}^{l-1} \mW_k$.
According to the first-order Taylor expansion, we have
\begin{align}
    \Norm{f_{\vw+\vu}(\mX)-f_{\vw}(\mX)}_2^2 &= \Norm{\sum_{l=1}^d \mJ_l \vu_l}_2^2 + o(\Norm{\vu_l}_2^2)\\
    &\le d \sum_{l=1}^d \Norm{ \mJ_l \vu_l}_2^2 \\
    &\le \sum_{l=1}^d \Norm{ \mA_l \vu_l}_2^2.
\end{align}
Therefore, the perturbation bound is satisfied since $\Norm{\cdot}_{\infty} \le \Norm{\cdot}_2$ for vectors. 
As a byproduct, by the first-order Taylor expansion, we are able to obtain a new GCN perturbation bound in Lemma \ref{lemma:GCN-pert-bound-new} comparable as in \cite{liao2020pac} with relaxed assumptions.

To make the KL term more tractable, we make  the following two specific designs of $\mA_l$.

\subsubsection{Sensitivity matrices}
In what follows, we consider two designs of Jacobian-type sensitivity matrices.

\textbf{Diagonal case.}
For simplicity, we can set the sensitivity matrix $\mA_l$ as a diagonal matrix, i.e.,
\begin{align}
    \mA_l^{\mathrm{diag}} = \sqrt{\frac{d}{n}} B \Norm{{\mL}^{d-1}\vone}_2 \prod_{i \ne l} \Norm{\mW_i}_2 \mI_{h_l h_{l-1}}
\end{align}
so that $\sum_{l=1}^d \Norm{ \mA_l \vu_l}_2^2 \le \sum_{l=1}^d \Norm{ \mA_l^{\mathrm{diag}} \vu_l}_2^2$ for any $\vu_l$ due to $\mA_l^T \mA_l \preceq (\mA_l^{\mathrm{diag}})^T \mA_l^{\mathrm{diag}}$ resulted from $\Norm{\mX}_2 \le \Norm{\mX}_F \le \sqrt{n}\Norm{\mX}_{2,\infty} \le \sqrt{n}B$ and the triangle inequalities of the operator norms.

\textbf{Low-rank case.} As $\mA_l$ is a fat matrix with usually $K=h_d \le h_l h_{l-1}$, we define an explicit rank-$K$ sensitivity matrix, i.e.,
\begin{align}
\mA_l^{\mathrm{lr}}=\frac{\sqrt{d}}{n} \big(\prod_{k=d}^{l+1} \Norm{\mW_k}_2 \mI_{K} \big) \otimes \big(\vone^T {\mL}^{d-1} \mX \prod_{k=1}^{l-1} \mW_k \big)
\end{align}
so that $\sum_{l=1}^d \Norm{ \mA_l \vu_l}_2^2 \le \sum_{l=1}^d \Norm{ \mA_l^{\mathrm{lr}} \vu_l}_2^2$ for any $\vu_l$ due to $\mA_l^T \mA_l \preceq (\mA_l^{\mathrm{lr}})^T \mA_l^{\mathrm{lr}}$ resulted from Lemma \ref{lemma:kron-properties} and the fact that $\mN_l^T \mN_l \preceq \Norm{\mN_l}_2^2 \mI_{K}$ is equivalent to $\mN_l \mN_l^T \preceq \Norm{\mN_l}_2^2 \mI_{h_l}$.
It also holds that 
\begin{align}
\mA_l \mA_l^T =  \Norm{\vv_l}_2^2 \mN_l^T \mN_l \preceq \Norm{\vv_l}^2 \Norm{\mN_l}_2^2 \mI_{K} = \mA_l^{\mathrm{lr}} (\mA_l^{\mathrm{lr}})^T.
\end{align}

\subsubsection{Prior variance} In general, we need to choose a proper $\sigma^2$ as a function of $\gamma$, $\mA_l$ and $\mR_l$ to satisfy the perturbation condition \eqref{eq:perb-cond} with the following chain of inequalities
\begin{align}
   \sum_{l=1}^d \Norm{\mA_l \vu_l}_2^2 \le \sigma^2 \kappa \sum_{l=1}^d \Tr(\mA_l \mR_l \mA_l^T) \le \frac{\gamma^2}{16}. \label{eq:chain-of-ineq}
\end{align}
As long as there is some $\sigma^2$ to make the above inequalities satisfied, the perturbation condition holds.

As $\sigma^2$ also appears in the prior distribution, which should not depend on training weights $\vw$, we introduce the estimates of weight matrices as auxiliary variables leveraging the spectral normalization tricks in \cite{neyshabur2018pac,liao2020pac}. 

\textbf{Diagonal case.}
Specifically, for the diagonal design, let
\begin{align}
    \hat{\mA}_l^{\mathrm{diag}} = \sqrt{\frac{d}{n}} B \Norm{{\mL}^{d-1}\vone}_2 \prod_{i \ne l} \Norm{\hat{\mW}_i}_2 \mI_{h_l h_{l-1}}.
\end{align}
Given $\beta=\Norm{{\mW}_l}_2$ and $\hat{\beta}=\Norm{\hat{\mW}_l}_2$, for $\Abs{\beta-\hat{\beta}} \le \frac{1}{d} \beta$, we have $\frac{1}{e} \beta^{d-1} \le \hat{\beta}^{d-1} \le e \beta^{d-1}$, such that
\begin{align} \label{eq:AA-chain-inequalities}
    \frac{1}{e^2} \mA_l^{\mathrm{diag}} (\mA_l^{\mathrm{diag}})^T \preceq \hat{\mA}_l^{\mathrm{diag}} (\hat{\mA}_l^{\mathrm{diag}})^T \preceq {e^2} \mA_l^{\mathrm{diag}} (\mA_l^{\mathrm{diag}})^T.
\end{align}
Together with the fact that 
\begin{align}
    \mA_l \mR_l \mA_l^T \preceq \mA_l \mA_l^T \preceq \mA_l^{\mathrm{diag}} (\mA_l^{\mathrm{diag}})^T
\end{align}
as $\mR_l \preceq \mI$, we rewrite the perturbation condition as
\begin{align}
   \sum_{l=1}^d \Norm{\mA_l \vu_l}_2^2 &\le \sigma^2 \kappa \sum_{l=1}^d \Tr(\mA_l \mR_l \mA_l^T) \nn \\
   &\le e^2 \sigma^2 \kappa \sum_{l=1}^d \Tr(\hat{\mA}_l^{\mathrm{diag}} (\hat{\mA}_l^{\mathrm{diag}})^T) \le \frac{\gamma^2}{16}, 
\end{align}
which yields a valid choice of $\sigma^2$, i.e,
\begin{align}
    \frac{1}{\sigma^2} = \frac{16e^2\kappa}{\gamma^2} \sum_{l=1}^d \Tr(\hat{\mA}_l^{\mathrm{diag}} (\hat{\mA}_l^{\mathrm{diag}})^T)
\end{align}
as a function of weight estimates $\{\hat{\mW}_l\}_{l=1}^d$.

\textbf{Low-rank case.}
Given $\beta=\Norm{{\mW}_l}_2$ and $\hat{\beta}=\Norm{\hat{\mW}_l}_2$, for $\Abs{\beta-\hat{\beta}} \le \frac{1}{d} \beta$, we have $\frac{1}{e} \beta^{d-1} \le \hat{\beta}^{d-1} \le e \beta^{d-1}$, such that
\begin{align}
    {\mA}_l^{\mathrm{lr}} ({\mA}_l^{\mathrm{lr}})^T &= \frac{d}{n^2} \Norm{\vv_l}_2^2  \prod_{k=d}^{l+1} \Norm{{\mW}_k}_2^2 \mI_{K} \\
    & \preceq \frac{B^2d}{n} \Norm{{\mL}^{d-1}\vone}_2^2 \prod_{k \ne l} \Norm{{\mW}_k}_2^2 \mI_{K}\\
    & \preceq \frac{e^2B^2d}{n} \Norm{{\mL}^{d-1}\vone}_2^2 \prod_{k \ne l} \Norm{\hat{\mW}_k}_2^2 \mI_{K}
\end{align}
with $\hat{\vv}_l^T=\vone^T {\mL}^{d-1} \mX \prod_{k=1}^{l-1} \hat{\mW}_k$.
Together with the fact that 
\begin{align}
    \mA_l \mR_l \mA_l^T \preceq \mA_l \mA_l^T \preceq \mA_l^{\mathrm{lr}} (\mA_l^{\mathrm{lr}})^T
\end{align}
as $\mR_l \preceq \mI$, we rewrite the perturbation condition as
\begin{align}
   \sum_{l=1}^d \Norm{\mA_l \vu_l}_2^2 &\le \sigma^2 \kappa \sum_{l=1}^d \Tr(\mA_l \mR_l \mA_l^T) \nn \\
   &\le \sigma^2 \kappa \sum_{l=1}^d \Tr({\mA}_l^{\mathrm{lr}} ({\mA}_l^{\mathrm{lr}})^T)\\
   & \le \frac{e^2 B^2 d K \sigma^2 \kappa}{n} \Norm{{\mL}^{d-1}\vone}_2^2 \sum_{l=1}^d \prod_{k \ne l} \Norm{\hat{\mW}_k}_2^2
   \le \frac{\gamma^2}{16}, 
\end{align}
which yields a valid choice of $\sigma^2$, i.e,
\begin{align}
    \frac{1}{\sigma^2} =  \frac{16e^2 \kappa B^2 d K }{\gamma^2 n} \Norm{{\mL}^{d-1}\vone}_2^2 \sum_{l=1}^d\prod_{k \ne l} \Norm{\hat{\mW}_k}_2^2,
\end{align}
relaxing the dependent of weight matrices.

\subsubsection{Generalization bounds} 
In what follows, we consider the diagonal and low-rank cases separately.

\textbf{Diagonal case.}
With the adjusted ${\sigma}^2$ for a fixed $\hat{\beta}$
and the optimized $\mR_l^*$, i.e.,
\begin{align} \label{eq:opt-Rl}
    \mR_l^* = \left(\mI + \frac{16 \kappa\Norm{\vw}_2^2}{\gamma^2} (\mA_l^{\mathrm{diag}})^T \mA_l^{\mathrm{diag}}  \right)^{-1},
\end{align}
we can obtain the resulting upper bound of the KL divergence in \eqref{eq:main-opt-obj} for $h=\max_l h_l$, i.e.,
\begin{align}
     \KL &=  \frac{1}{2} \sum_{l=1}^d \frac{\|\mW_l\|_F^2}{{\sigma}^2} + \Tr(\mR_l^*) - \log \det \mR_l^* - h^2 \\
     &\le \frac{8e^2\kappa \Norm{\vw}_2^2}{\gamma^2} \sum_{l=1}^d \Tr({\hat{\mA}}_l^{\mathrm{diag}} ({\hat{\mA}}_l^{\mathrm{diag}})^T) \nn \\
     & \qquad \qquad \qquad \qquad + \frac{\eta^2}{2} \Tr((\mA_l^{\mathrm{diag}})^T \mA_l^{\mathrm{diag}})\\
     &\le \frac{8(e^4+1)\kappa \Norm{\vw}_2^2}{\gamma^2} \sum_{l=1}^d \Tr({\mA}_l^{\mathrm{diag}} ({\mA}_l^{\mathrm{diag}})^T)\\
     &\lesssim \gO\left( \frac{B^2 d^2h^2 \beta^{2d-2}}{\gamma^2 n} \Norm{\vw}_2^2 \Norm{{\mL}^{d-1}\vone}_2^2 \right)\\
     &\lesssim \gO\left( \frac{B^2 d^2h^2 \Norm{{\mL}^{d-1}\vone}_2^2}{\gamma^2 n} \Phi(\vw)  \right)
\end{align}
where the first inequality is due to Lemma \ref{lemma:fun-properties} presented in the Appendix, the second inequality comes from \eqref{eq:AA-chain-inequalities}, and the last inequity is with
\begin{align}
    \Phi(\vw)&=\prod_l \Norm{\mW_l}_2^2  \sum_{l=1}^d \frac{\Norm{\mW_l}_F^2}{\Norm{\mW_l}_2^2}.
\end{align}
As such, we end up with new generalization error bounds
\begin{align} \label{eq:gene-bounds-diag}
    \MoveEqLeft L_0(f_{\vw}) \le \hat{L}_{\gamma}(f_{\vw}) \nn \\
    & + \gO \left( \sqrt{ \frac{ B^2 d^2h^2 \frac{1}{n}\Norm{{\mL}^{d-1}\vone}_2^2  \Phi(\vw) + \ln \frac{dm}{\delta}}{\gamma^2m}} \right)
\end{align}
by taking a union bound over all possible $\hat{\beta}$.

\textbf{Low-rank case.}
Given the optimized $\mR_l^*$, i.e.,
\begin{align} \label{eq:opt-Rl}
    \mR_l^* = \left(\mI + \frac{16 \kappa\Norm{\vw}_2^2}{\gamma^2} (\mA_l^{\mathrm{lr}})^T \mA_l^{\mathrm{lr}}  \right)^{-1},
\end{align}
together with the adjusted ${\sigma}^2$ for a fixed $\hat{\beta}$, we can obtain the resulting KL divergence and its upper bounds with $h=\max_l h_l$ for simplicity, i.e.,
\begin{align}
     \KL &=  \frac{1}{2} \sum_{l=1}^d \frac{\|\mW_l\|_F^2}{{\sigma}^2} + \Tr(\mR_l^*) - \log \det \mR_l^* - h^2 \\
     & \le \frac{8e^2 \kappa B^2 d K }{\gamma^2 n} \Norm{\vw}_2^2 \Norm{{\mL}^{d-1}\vone}_2^2 \sum_{l=1}^d\prod_{k \ne l} \Norm{\hat{\mW}_k}_2^2 \nn \\
     & \qquad \qquad \qquad + \frac{\eta^2}{2} \Tr ((\mA_l^{\mathrm{lr}})^T \mA_l^{\mathrm{lr}})\\
     & \le \frac{8(e^4+1) \kappa B^2 d K }{\gamma^2 n} \Norm{\vw}_2^2 \Norm{{\mL}^{d-1}\vone}_2^2 \sum_{l=1}^d\prod_{k \ne l} \Norm{{\mW}_k}_2^2\\
     &\lesssim \gO\left( \frac{B^2 d^2K \Norm{{\mL}^{d-1}\vone}_2^2}{\gamma^2 n} \Phi(\vw)  \right)
\end{align}
by which we have the similar generalization bound as in \eqref{eq:gene-bounds-diag} for any weights as
\begin{align} \label{eq:gene-bounds-low-rank}
    \MoveEqLeft L_0(f_{\vw}) \le \hat{L}_{\gamma}(f_{\vw}) \nn \\
    & + \gO \left( \sqrt{ \frac{ B^2 d^2K \frac{1}{n}\Norm{{\mL}^{d-1}\vone}_2^2  \Phi(\vw) + \ln \frac{dm}{\delta}}{\gamma^2m}} \right)
\end{align}
It is worth noting that when considering the low-rank design of sensitivity matrix, the full dimension of $h^2$ is reduced to $K$, such that $K \ll h^2$ in practice implies a tighter bound.

\subsection{Spectral Designs and Proof of Theorem \ref{thm:generalization-bounds-spectral}}
\label{proof:generalization-bounds-spectral}
Given the Laplacian eigenpairs $(\lambda_i,\vv_i)$ of the graph propagation matrix $\mL$, i.e., $\mL=\mV \mLambda \mV^T$, we can design the sensitivity matrices according to the eigenspace representations.
Specifically, let
\begin{align} \label{eq:spectral-Al}
    {\mA}_l= \alpha_l \big(\mI_{h_l} \otimes (\mV \psi(\mLambda) \mV^T\mH_{l-1})\big)
\end{align}
with 
\begin{align}
    \alpha_l = \frac{\sqrt{d}}{n} \Norm{\vone^T\mL^{d-l-1}}_2 \prod_{k=d}^{l+1} \Norm{\mW_k}_2.
\end{align}
where $\psi(\mLambda)=\mathrm{diag}(\lambda_1,\dots,\lambda_n)$ is the spectral filter, and $\psi:\R \to \R$ is a function (e.g., polynomial, exponential) up to design. For notational simplicity, we denote $\psi(\mL)=\mV f(\mLambda) \mV^T$, such that $\psi(\mL)^l=\mV \psi(\mLambda)^l \mV^T$ for any $l$. It is readily to verify that, if we ensure $\Abs{\psi(\lambda)}  \ge \Abs{\lambda}$ for all $\lambda$, we have $\psi(\mL) \succeq \mL$.
As such, the goal of design is to find proper graph filters $f$ to make sure the perturbation bound is satisfied, and at the same time to be as small as possible to yield a tight generalization bound.

\subsubsection{Perturbation bound}
Given \eqref{eq:G_l}, it follows $ \alpha_l \ge \sqrt{d}\Norm{\mG_l}_2$ for all $l$, since
\begin{align}
    \Norm{\mG_l}_2^2 &\le \frac{1}{n^2}\Norm{(\prod_{k=d}^{l+1} \mW_k^T) \otimes (\vone^T \mL^{d-l-1}) }_2^2\\
    &\le \frac{1}{n^2}\prod_{k=d}^{l+1} \Norm{\mW_k}_2^2 \Norm{\vone^T \mL^{d-l-1}) }_2^2.
\end{align}
Due to the fact that $\mB_l^T \mB_l = \mB_l$, we have $\mB_l \mG_l^T \mG_l \mB_l \preceq \Norm{\mG_l}_2^2 \mI \preceq \frac{\alpha_l^2}{d} \mI$.
Further, if we ensure $\Abs{\psi(\lambda)}  \ge \Abs{\lambda}$ for all $\lambda$, it follows that $\mJ_l^T \mJ_l \preceq \frac{1}{d}{\mA}_l^T {\mA}_l$.  
Therefore, we have
\begin{align}
    \Norm{f_{\vw+\vu}(\mX)-f_{\vw}(\mX)}_2^2 &= \Norm{\sum_{l=1}^d \mJ_l \vu_l}_2^2 + o(\Norm{\vu_l}_2^2)\\
    &\le d \sum_{l=1}^d \Norm{ \mJ_l \vu_l}_2^2 \le \sum_{l=1}^d \Norm{ \mA_l \vu_l}_2^2
\end{align}
such that the perturbation bound is satisfied.

\subsubsection{Prior variance} 
Given the spectral design of the sensitivity matrix in \eqref{eq:spectral-Al}, we have
\begin{align}
     \Tr({\mA}_l^T {\mA}_l) &= {\alpha}_l^2 h_l  \Tr({\mH}_{l-1}^T \mV \psi(\mLambda)^2 \mV^T {\mH}_{l-1})\big)\\
     &= {\alpha}_l^2 h_l \Norm{\psi(\mL){\mH}_{l-1}}_F^2\\
     &\le \frac{d h_l}{n^2} \Norm{\vone^T\mL^{d-l-1}}_2^2 \prod_{k=d}^{l+1} \Norm{{\mW}_k}_2^2\nn \\
     & \qquad \cdot  \Norm{\psi(\mL)\mL^{l-1} \mX}_F^2 \prod_{k=1}^{l-1} \Norm{{\mW}_k}_2^2\\
     &\le \frac{d h_l}{n^2} \Norm{\mX}_F^2\Norm{\vone^T\mL^{d-l-1}}_2^2\nn \\ 
     &\qquad \cdot\Norm{\psi(\mL)\mL^{l-1}}_2^2 \prod_{k\neq l} \Norm{{\mW}_k}_2^2\\
     &\le dh_l B^2 g_l^2(\lambda) \prod_{k\neq l} \Norm{{\mW}_k}_2^2
\end{align}
where $\Norm{\mX}_F \le \sqrt{n}\Norm{\mX}_{2,\infty}\le\sqrt{n}B$, $\Norm{\vone^T\mL^{d-l-1}}_2^2 \le n \rho(\mL)^{2d-2l-2}$ with the spectral radius defined as $\rho(\mL)=\max_j \Abs{\lambda_j}$, and the spectral sensitivity function
\begin{align}
    g_l(\lambda)=\max_i \Abs{\lambda_i^{d-l-1}} \max_j \Abs{\psi(\lambda_j)\lambda_j^{l-1}},
\end{align}
where the first factor $\rho(\mL)^{2d-2l-2}$ is the  worst-case amplification of a constant signal over $d-l-1$ hops, and the second factor represents the strongest single-step amplification possible from the filtered operator $\psi(\mL)\mL^{l-1}$.

In general, we need to choose a proper $\sigma^2$ as a function of $\gamma$, $\mA_l$ and $\mR_l$ to satisfy the perturbation condition \eqref{eq:perb-cond}. 
As $\sigma^2$ also appears in the prior distribution, which should not depend on training weights $\vw$, we introduce an auxiliary sensitivity matrix leveraging the spectral normalization tricks in \cite{neyshabur2018pac,liao2020pac}.

For simplicity, we assume $h=\max_l h_l$.
For the spectral design of the sensitivity matrix, we further have
\begin{align}
     \Tr({\mA}_l^T {\mA}_l) 
     &\le e^2 dh B^2 g_l^2(\lambda) \prod_{k\neq l} \Norm{\hat{\mW}_k}_2^2
\end{align}
due to the fact that $\beta^{d-1} \le e\hat{\beta}^{d-1}$ for $\Abs{\beta-\hat{\beta}}\le \frac{1}{d}\beta$ with $\beta=\Norm{{\mW}_k}_2$ and $\hat{\beta}=\Norm{\hat{\mW}_k}_2$.

According to the perturbation condition, we need to choose a proper $\sigma^2$ to ensure
the following inequalities
\begin{align}
   \sum_{l=1}^d \Norm{\mA_l \vu_l}_2^2 &\le \sigma^2 \kappa \sum_{l=1}^d \Tr(\mA_l \mR_l \mA_l^T) \le \sigma^2 \kappa \sum_{l=1}^d \Tr(\mA_l \mA_l^T) \\
   &\le   \sigma^2 \kappa e^2 dh B^2 \sum_{l=1}^d g_l^2(\lambda) \prod_{k\neq l} \Norm{\hat{\mW}_k}_2^2
   \le \frac{\gamma^2}{16}
\end{align}
where $\kappa=1+2\ln 2 + \sqrt{4\ln 2}$, and $\mR_l = (\mI + \eta^2 \mA_l^T \mA_l)^{-1}$ according to our previous work \cite{yi2026unified}.
As long as there is some $\sigma^2$ to make the above inequalities satisfied, the perturbation condition holds.

By simply setting
\begin{align}
    \frac{1}{\sigma^2} = \frac{ 16 e^2 \kappa d h B^2}{\gamma^2} \sum_{l=1}^d g_l^2(\lambda) \prod_{k\neq l} \Norm{\hat{\mW}_k}_2^2
\end{align}
we have a proper choice of $\sigma^2$ independent of model weights to make the perturbation condition satisfied.

\subsubsection{Generalization bounds} 
According to different graph filter designs and the corresponding choices of $\sigma^2$, we can derive the different generalization bounds. 

With the adjusted ${\sigma}^2$
and the optimized $\mR_l^*$, i.e.,
\begin{align} 
    \mR_l^* = \left(\mI + \frac{16 \kappa\Norm{\vw}_2^2}{\gamma^2} \mA_l^T \mA_l  \right)^{-1},
\end{align}
we can obtain the resulting KL divergence in \eqref{eq:main-opt-obj} or its upper bounds, i.e.,
\begin{align}
     \KL &=  \frac{1}{2} \sum_{l=1}^d \frac{\|\mW_l\|_F^2}{{\sigma}^2} + \Tr(\mR_l^*) - \log \det \mR_l^* - h^2 \\
     & \le \frac{ 8 e^2 \kappa d h B^2}{\gamma^2} \Norm{\vw}_2^2 \sum_{l=1}^d g_l(\lambda) \prod_{k\neq l} \Norm{\hat{\mW}_k}_2^2 \nn \\
     & \qquad \qquad \qquad + \frac{\eta^2}{2} \Tr (\mA_l^T \mA_l)\\
     & \le \frac{8(e^4+1) \kappa d h B^2}{\gamma^2} \Norm{\vw}_2^2 \sum_{l=1}^d g_l^2(\lambda) \prod_{k\neq l} \Norm{{\mW}_k}_2^2\\
     &\lesssim \gO\left( \frac{B^2 d^2h} {\gamma^2} \Phi^{\mathrm{sp}}(\vw)  \right)
\end{align}
where the graph spectral complexity is defined as
\begin{align} \label{eq:graph-spectral-complexity}
    \Phi^{\mathrm{sp}}(\vw) = \prod_{l=1}^d \Norm{{\mW}_l}_2^2 \sum_{l=1}^d g_l^2(\lambda) \frac{\Norm{{\mW}_l}_F^2}{\Norm{{\mW}_l}_2^2}
\end{align}
that involves both spectral complexities of the weight matrices and the graph propagation matrix.
The first term in \eqref{eq:graph-spectral-complexity} measures the worst-case Lipschitz constant of the entire $d$ graph convolutional layers. If any layer has a large weight norm, the entire product blows up, indicating high sensitivity to input perturbations. In the second term, $g_l(\lambda)$ is the graph-induced sensitivity at layer $l$, and $\frac{\Norm{{\mW}_l}_F^2}{\Norm{{\mW}_l}_2^2}$ is effective number of parameters in layer $l$. This sum captures the total graph-weighted complexity across all layers. It turns out that layers with high graph sensitivity ($g_l$) and many effective parameters contribute more to the graph spectral complexity.

Therefore, we end up with a new generalization bound
\begin{align} 
    L_0(f_{\vw}) \le \hat{L}_{\gamma}(f_{\vw}) + \gO \left( \sqrt{ \frac{B^2 d^2h \Phi^{\mathrm{sp}}(\vw) + \ln \frac{dm}{\delta}}{\gamma^2m}} \right)
\end{align}
by taking the union bound over all possible choices of $\hat{\beta}$.
Compared with the previous bounds both in this paper and the existing works, the key factor is the new graph spectral complexity term $\Phi^{\mathrm{sp}}(\vw)$, which is a function of both the weight matrices $\{\mW_l\}_{l=1}^d$ and the graph propagation matrix $\mL$.

\subsubsection{Sensitivity matrices}
The most critical design in the sensitivity matrix is the graph filters $\psi(\lambda)$ for capturing the graph structures in generalization subject to the condition that $\Abs{\psi(\lambda)}  \ge \Abs{\lambda}$ for all $\lambda$ to satisfy the perturbation condition.
With different designs of the sensitivity graph filters $\psi(\lambda)$, we end up with different generalization bounds.
In the following, we focus on different designs of the graph filters $\psi(\lambda)$.

\textbf{Identity filter.} When $\psi(\lambda)=\lambda$, i.e., $\psi(\mL)=\mL$, we have
\begin{align}
    g_l^{\mathrm{id}}(\lambda)=\max_i \Abs{\lambda_i^{d-l-1}} \max_j \Abs{\lambda_j^{l}}
\end{align}
which can be upper bounded by $g_l^{\mathrm{id}}(\lambda) \le \rho(\mL)^{d-1}$. 
This recovers the generalization bounds in \cite{liao2020pac,sun2024pac} with $\rho(\mL)=1$ for the normalized adjacency matrix $\mL$.

The identity filter amplifies all frequencies equally. For smooth graphs (large $\lambda \approx 1$), $g_l(\lambda)$ grows with $l$, indicating depth-dependent sensitivity. For graphs with negative eigenvalues (heterophily), the maximum absolute value could come from $\lambda \approx -1$, leading to oscillatory behavior with layer.

\textbf{Low-pass filter.} When $\psi(\lambda)=\frac{1+2\xi}{1+\xi(1-\lambda)}$ with $\xi>0$, it probes sensitivity to smooth signals ($\lambda \approx 1$).
A large value reflects certain strong low-frequency influence, indicating potential over-smoothness. Therefore, we have
\begin{align}
    g_l^{\mathrm{lp}}(\lambda) \approx (1+2\xi)\rho(\mL)^{d-1}.
\end{align}
When the graph has strong low-frequency components, $g_l^{\mathrm{lp}}(\lambda)$ will be high. A layer with high ${g_l^{\mathrm{lp}}}/ { g_l^{\mathrm{id}}}$ usually indicates high susceptibility to over-smoothing.
If ${g_l^{\mathrm{lp}}}/{g_l^{\mathrm{id}}}$ grows with $l$, it means the smoothness-induced sensitivity compounds with depth—a clear risk signal for deep GNNs.
Such a design of $\psi(\lambda)$ satisfies $\Abs{\psi(\lambda)}\ge \Abs{\lambda}$ while amplifying low $\lambda$, so that the sensitivity bound is specific for over-smoothness.

There are also some alternative low-pass filter designs with tunable emphasis on smoothness, e.g.,  $\psi(\lambda)=\Abs{\lambda}+\xi(1-\lambda^2)$ with $\xi \ge 0$. Thus, $g_l^{\mathrm{lp}}(\lambda)$ depends on $\xi$ and the specific eigenvalue distribution, which can be tuned to diagnose if mid-range frequencies contribute significantly to sensitivity.

It is worth noting that due to $\Abs{\psi(\lambda)}\ge \Abs{\lambda}$, the low-pass filter always excesses the identity one, where the ``excess'' term in the bound is not just a penalty—it quantifies a design trade-off to answer the question as to how much worse could the generalization be if the smoothness is enforced for better empirical performance. 

\textbf{High-pass filter.} When $\psi(\lambda)=\frac{1+2\xi}{1+\xi(1+\lambda)}$ with $\xi>0$, it probes sensitivity to smooth signals ($\lambda \approx -1$).
A large result signals strong high-frequency influence, indicating sensitivity to heterophily or local noise. Therefore, we have
\begin{align}
    g_l^{\mathrm{hp}}(\lambda)\approx (1+2\xi)\rho(\mL)^{d-1}.
\end{align}
Such a design of $\psi(\lambda)$ satisfies $\Abs{\psi(\lambda)}\ge \Abs{\lambda}$ while amplifying high $\lambda$, so that we end up with a sensitivity bound specifically for heterophily.

Some other high-pass filter designs could aim to amplify negative eigenvalues, e.g., $\psi(\lambda)=\Abs{\lambda}+\xi(1+\lambda)$ with $\xi \ge 0$. Such a filter selectively amplifies the influence of negative eigenvalues. Therefore, a large bound indicates that operations in strongly heterophilic settings are highly sensitive.

\textbf{Impact on generalization.} 
For the above low-/high-pass filter designs, although the cost is a slightly larger bound, the payoff is a clear, spectral explanation for why the bound is large and what that implies about the GNN's behavior on a given graph via the spectral sensitivity function $g_l(\lambda)$.

If $g_l^{\mathrm{lp}} \gg g_l^{\mathrm{id}}$ for later layers, it follows that GNN's sensitivity is dominated by low frequencies, thus it indicates that low-frequency (smooth) components dominate the sensitivity at depth. This is a spectral signature of over-smoothing.
If $g_l^{\mathrm{hp}} \gg g_l^{\mathrm{id}}$ for early layers, sensitivity is dominated by high frequencies, which indicates high-frequency (oscillatory) components drive sensitivity. This suggests the model is highly sensitive to local variations, a potential indicator of heterophily or over-squashing.

Regarding generalization bounds, deep GNNs on homophilic graphs generalize well if later layers have moderate weights, and the generalization on heterophilic graphs depends critically on layer parity and weight distribution.
In practical designs, it is suggested choosing filters that minimize $g_l(\lambda)$ for the graph's spectral profile. For homophily, low-pass filters yield tighter bounds; for heterophily, high-pass or adaptive filters are better. If $g_l(\lambda)$ grows with $l$, shallow networks may generalize better; If it decays, depth is beneficial.
The tightest bound $\psi(\lambda)=\lambda$ tells the worst-case and the global spectral limit, whilst a ``suboptimal'' bound can tell local connectivity, spectral energy distribution, or the cost of smoothing.

Choosing a ``suboptimal'' sensitivity matrix $\mA_l$ that leads to a looser bound can be a powerful analytical tool. The goal shifts from minimizing a scalar to structuring the bound to reveal how specific graph properties influence generalization. A strategically ``worse'' design for $\mA_l$ can make the bound explicitly depend on more interpretable or controllable graph properties, offering these key insights.

\section{Conclusion}
This work conducted PAC-Bayesian generalization analysis for graph convolutional networks in graph classification, explicitly accounting for graph structures in generalization bounds. By extending a unified PAC-Bayesian framework for deep neural networks to graph-structured models, we provided a principled approach that accommodates anisotropic Gaussian posteriors, structured sensitivity to weight perturbations, and explicit incorporation of graph topology. The resulting topology-aware generalization bounds capture either spatial or spectral characteristics of GCNs and recover several existing PAC-Bayesian bounds as special cases, while offering guarantees that are tighter than prior results. Beyond their theoretical sharpness, the proposed bounds offer improved interpretability by clarifying how graph structure and heterogeneous parameter sensitivity jointly influence generalization in graph neural networks.
Future works consist of the extension of the proposed framework to other classes of GNNs, such as message-passing neural networks, attention-based GNNs, and architectures with adaptive or learnable graph operators, as well as the integration of adversarial settings, including adversarial perturbations of node/edge features and/or graph structure. 

\section{Appendix}
\subsection{Useful Lemmas}
\begin{lemma} \label{lemma:jacobian-matrix}
For a graph convolutional network (GCN) with the node embeddings $\mH_{l}= \phi({\mL} \mH_{l-1} \mW_l)$ at the $l$-th graph convolutional layer ($l < d$) and the last readout layer $f_{\vw}^T(\mX) = \frac{1}{n} \vone^T \mH_{d-1} \mW_d$, the Jacobian of the output with respect to weights at the $l$-th layer, i.e., $\mJ_l = \frac{\partial f_{\vw}}{\partial \mathrm{vec}(\mW_l)}$, can be given by
\begin{align}
    \mJ_l &= \frac{1}{n}\big(\mW_d^T \otimes \vone^T\big) \left(\prod_{k=d-1}^{l+1} \mB_k(\mW_k^T \otimes {\mL})\right)\nn\\
    & \qquad \qquad \cdot\mB_l \Big(\mI_{h_l} \otimes  ({\mL} \mH_{l-1})\Big). 
\end{align}
\end{lemma}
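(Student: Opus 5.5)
The plan is a direct application of the chain rule to the composition $\mX \mapsto \mH_1 \mapsto \cdots \mapsto \mH_{d-1} \mapsto f_{\vw}$, working throughout in vectorized form with the identity $\mathrm{vec}(\mA \mX \mB) = (\mB^T \otimes \mA)\,\mathrm{vec}(\mX)$ stated in the notation section. Since $\mW_l$ enters the network only through layer $l$, I will factor
\begin{align}
\mJ_l = \frac{\partial f_{\vw}}{\partial \mathrm{vec}(\mH_l)} \cdot \frac{\partial \mathrm{vec}(\mH_l)}{\partial \mathrm{vec}(\mW_l)} = \mG_l \cdot \frac{\partial \mathrm{vec}(\mH_l)}{\partial \mathrm{vec}(\mW_l)},
\end{align}
and then compute the two factors separately.

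\emph{Local factor.} Write $\mH_l = \phi(\mZ_l)$ with $\mZ_l = \mL \mH_{l-1} \mW_l$. Because $\phi$ acts element-wise, $\partial\,\mathrm{vec}(\mH_l)/\partial\,\mathrm{vec}(\mZ_l) = \mathrm{diag}(\mathrm{vec}(\phi'(\mZ_l))) = \mB_l$. Next, $\mathrm{vec}(\mZ_l) = \mathrm{vec}\big((\mL\mH_{l-1})\mW_l\mI_{h_l}\big) = \big(\mI_{h_l} \otimes (\mL \mH_{l-1})\big)\mathrm{vec}(\mW_l)$. Combining the two gives the local factor $\mB_l\big(\mI_{h_l} \otimes (\mL\mH_{l-1})\big)$.

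\emph{Backward factor $\mG_l$.} This follows by backward recursion. For the readout, $f_{\vw} = \mathrm{vec}\big(\tfrac{1}{n}\vone^T \mH_{d-1}\mW_d\big) = \tfrac{1}{n}(\mW_d^T \otimes \vone^T)\,\mathrm{vec}(\mH_{d-1})$, so $\mG_{d-1} = \tfrac{1}{n}(\mW_d^T \otimes \vone^T)$. For a convolutional layer $k$, the map $\mH_{k-1} \mapsto \mH_k$ has Jacobian $\mB_k \cdot \partial\,\mathrm{vec}(\mL \mH_{k-1} \mW_k)/\partial\,\mathrm{vec}(\mH_{k-1}) = \mB_k(\mW_k^T \otimes \mL)$, again by the vec identity. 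This yields the recursion $\mG_{k-1} = \mG_k \mB_k(\mW_k^T \otimes \mL)$. Unrolling it from $k=d-1$ down to $k=l+1$ produces the ordered product $\prod_{k=d-1}^{l+1}$, which matches \eqref{eq:G_l}. Multiplying $\mG_l$ by the local factor then gives the claimed formula.

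I expect the main obstacle to be bookkeeping rather than any real difficulty. First, the product must be ordered with the left-to-right factor indices decreasing, since the matrices do not commute. Second, the conventions must stay consistent: columns are stacked in $\mathrm{vec}$, and $f_{\vw}$ is treated as a column vector in $\R^K$ even though the readout is written as a row. Third, ReLU is not differentiable at $0$, so I will adopt the almost-everywhere convention $\phi'(0) := 0$, which keeps the diagonal entries of $\mB_l$ in $\{0,1\}$. Finally, I will sanity-check the dimensions: $\mG_l \in \R^{K \times nh_l}$, $\mB_l \in \R^{nh_l \times nh_l}$, and $\mI_{h_l}\otimes(\mL\mH_{l-1}) \in \R^{nh_l \times h_{l-1}h_l}$.
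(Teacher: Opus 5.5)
Your proposal is correct and follows essentially the same route as the paper. The paper also uses $\mathrm{vec}(\mA\mX\mB)=(\mB^T\otimes\mA)\mathrm{vec}(\mX)$ to obtain the readout factor $\frac{1}{n}(\mW_d^T\otimes\vone^T)$, the layer-to-layer Jacobians $\mB_k(\mW_k^T\otimes\mL)$, and the local factor $\mB_l(\mI_{h_l}\otimes\mL\mH_{l-1})$, and then chains them; it phrases this through first-order perturbations $\Delta\mH_k$ rather than your backward recursion for $\mG_l$, and it leaves implicit your convention $\phi'(0):=0$ and your ordering and dimension checks.
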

\begin{proof}
    For the readout layer, we have
\begin{align}
    \mathrm{vec}(f_{\vw}^T(\mX))&= \Big(\frac{1}{n}\mI_{h_d} \otimes (\vone^T \mH_{d-1} )\Big) \mathrm{vec}(\mW_d)
\end{align}
which yields
\begin{align}
    \mJ_d =  \frac{1}{n} \mI_{h_d} \otimes (\vone^T \mH_{d-1} ) \in \R^{h_d \times h_{d-1}h_d}.
\end{align}
For the graph convolutional layer-$l$ with $l<d$,
define $\mB_l=\mathrm{diag}(\mathrm{vec}(\phi'({\mL} \mH_{l-1}\mW_l)) \in \R^{nh_l \times nh_l}$, where $\phi'$ is the derivative of the activation function. When $\phi$ is ReLU, then the diagonal elements in $\mB_l$ is either 0 or 1. For the last readout layer, the output difference is $\Delta f_{\vw}^T(\mX) = \frac{1}{n} \vone^T \Delta \mH_{d-1} \mW_d$, where $\Delta \mH_{k}$ is the output difference at the $k$-th layer due to the accumulated weight perturbation from the previous layers, resulting in
\begin{align}
    \mathrm{vec}(\Delta f_{\vw}(\mX))&= \Big(\frac{1}{n}\mW_{d}^T \otimes \vone^T \Big) \mathrm{vec}(\Delta \mH_{d-1}).
\end{align}
Let $\Delta \mW_l$ be the weight perturbation of $\mW_l$.
When $k=d-1,\dots,l+1$, define
\begin{align}
 \Delta \mZ_k = {\mL} \Delta\mH_{k-1}\mW_k + {\mL} \mH_{k-1}\Delta \mW_k  
\end{align}
where the first term is due to the propagated perturbation from the previous layer, and the second term is the direct weight perturbation at the current layer.
So, by vectorization and applying the ReLU activation, we have
\begin{align}
   \vec(\Delta\mH_{k})&= \mB_k \vec(\Delta \mZ_k) \\
   &= \mB_k(\mW_k^T \otimes {\mL}) \vec(\Delta\mH_{k-1}) \nn \\
   &\qquad \qquad + \mB_k(\mI_{h_k} \otimes {\mL}\mH_{k-1}) \vec(\Delta \mW_k).
\end{align}
Therefore, we have
\begin{align} 
    \frac{\partial  \vec(\mH_{k})}{\partial  \vec(\mH_{k-1})}=\frac{  \vec(\Delta\mH_{k})}{  \vec(\Delta\mH_{k-1})}\Big|_{\Delta\mH_{k-1} \to \mathbf{0}} = \mB_k(\mW_k^T \otimes {\mL})\\
    \frac{\partial  \vec(\mH_{l})}{\partial  \vec(\mW_{l})}=\frac{  \vec(\Delta\mH_{l})}{  \vec(\Delta \mW_{l})}\Big|_{\Delta  \mW_{l} \to \mathbf{0}} = \mB_l(\mI_{h_l} \otimes {\mL}\mH_{l-1}). \
\end{align}

Therefore,
\begin{align}
    \mJ_l &= \frac{\partial\vec( f_{\vw})}{\partial\vec( \mH_{d-1})} \prod_{k=d-1}^{l+1}\frac{\partial\vec( \mH_{k})}{\partial\vec( \mH_{k-1})} \frac{\partial\vec( \mH_{l})}{\partial\vec(\mW_l)} \\
    &= \frac{1}{n}\big(\mW_d^T \otimes \vone^T\big) \left(\prod_{k=d-1}^{l+1} \mB_k(\mW_k^T \otimes {\mL})\right)\nn\\
    & \qquad \qquad \cdot\mB_l \Big(\mI_{h_l} \otimes  ({\mL} \mH_{l-1})\Big)
\end{align}
with $\mH_{l-1}=\phi({\mL} \mH_{l-2} \mW_{l-1})$.
\end{proof}

\begin{lemma} \label{lemma:fun-properties}
    Given a matrix $\mX \in \R^{m \times n}$ and $\alpha>0$ is independent of $\mX$ such that $\mR=(\mI+\alpha \mX^T \mX)^{-1}$, we have 
    \begin{align}
        f(\mX) = \Tr(\mR) - \log \det \mR - n \le \alpha \Tr(\mX^T \mX)
    \end{align}
    where the equality holds if and only if $\mX=\mathbf{0}$.
\end{lemma}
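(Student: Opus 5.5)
The statement to prove is Lemma \ref{lemma:fun-properties}: for $\mR=(\mI+\alpha\mX^T\mX)^{-1}$ we want $f(\mX)=\Tr(\mR)-\log\det\mR-n\le\alpha\Tr(\mX^T\mX)$, with equality iff $\mX=\mathbf{0}$. The plan is to diagonalize and reduce everything to a scalar inequality on the eigenvalues.

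\textbf{Spectral reduction.} Since $\mX^T\mX\in\R^{n\times n}$ is symmetric positive semidefinite, write $\mX^T\mX=\mU\,\mathrm{diag}(s_1,\dots,s_n)\,\mU^T$ with $s_i\ge 0$ (the squared singular values of $\mX$, padded with zeros). Then $\mR=\mU\,\mathrm{diag}\big((1+\alpha s_i)^{-1}\big)\,\mU^T$ is well defined and positive definite, because $\alpha>0$. Trace and determinant are invariant under the orthogonal conjugation, so
\begin{align}
f(\mX)=\sum_{i=1}^n\Big(\frac{1}{1+\alpha s_i}+\log(1+\alpha s_i)-1\Big), \qquad \alpha\Tr(\mX^T\mX)=\sum_{i=1}^n \alpha s_i .
\end{align}
It therefore suffices to prove the scalar inequality $\varphi(t)\triangleq \frac{1}{1+t}+\log(1+t)-1-t\le 0$ for all $t=\alpha s_i\ge 0$, with equality only at $t=0$.

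\textbf{Scalar inequality.} We have $\varphi(0)=0$ and
\begin{align}
\varphi'(t)=-\frac{1}{(1+t)^2}+\frac{1}{1+t}-1=-\frac{t^2+t+1}{(1+t)^2}+\frac{t}{(1+t)^2}=-\frac{t^2+1}{(1+t)^2}<0 .
\end{align}
Hence $\varphi$ is strictly decreasing on $[0,\infty)$, which gives $\varphi(t)<0$ for every $t>0$. An alternative route uses $\log(1+t)\le t$ together with $\frac{1}{1+t}\le 1$; this gives $\varphi\le 0$, and both inequalities are strict when $t>0$.

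\textbf{Conclusion and equality case.} Summing $\varphi(\alpha s_i)\le 0$ over $i$ gives the claimed inequality. Equality holds iff every term vanishes, i.e.\ $\alpha s_i=0$ for all $i$. Since $\alpha>0$, this means all $s_i=0$, so $\Tr(\mX^T\mX)=\Norm{\mX}_F^2=0$, i.e.\ $\mX=\mathbf{0}$. Conversely, $\mX=\mathbf{0}$ gives $\mR=\mI$ and both sides equal zero.

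\textbf{Main obstacle.} There is essentially none; the only step needing care is the equality case. Without strictness one gets only $\le$, so the argument must invoke either the strict monotonicity of $\varphi$ or the strict form of $\log(1+t)<t$ for $t>0$.
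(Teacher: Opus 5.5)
Your argument is correct and follows the paper's route: diagonalize $\mX^T\mX$, reduce $f(\mX)$ to $\sum_i h(\alpha\lambda_i)$ with $h(u)=\frac{1}{1+u}+\log(1+u)-1$, and use $h(u)\le u$ with equality only at $u=0$; the paper calls this scalar inequality readily verified, and you actually check it. There is one harmless arithmetic slip: the derivative is $\varphi'(t)=-\frac{t^2+t+1}{(1+t)^2}$, not $-\frac{t^2+1}{(1+t)^2}$, but it is still strictly negative, so your conclusion is unaffected.
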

\begin{proof}
    Let $\lambda_i \ge 0$ be the eigenvalues of $\mX^T \mX$ and define $\mu_i=\frac{1}{1+\alpha \lambda_i} \in (0,1]$. Then
    \begin{align}
        f(\mX)&=\sum_{i=1}^n (\mu_i - \log \mu_i - 1)\\
        &= \sum_{i=1}^n \left( \frac{1}{1+\alpha \lambda_i} + \log (1+\alpha \lambda_i) - 1 \right)\\
        &= \sum_{i=1}^n h(\alpha \lambda_i)
    \end{align}
    where $h(u)=\frac{1}{1+u}+\log(1+u)-1$ for $u \ge 0$. It is readily verified that $h(u) \le u$ for all $u\ge0$, where the equality holds if and only if $u=0$. Thus, we have
    \begin{align}
        f(\mX)= \sum_{i=1}^n h(\alpha \lambda_i) \le \sum_{i=1}^n \alpha \lambda_i = \alpha \Tr(\mX^T \mX)
    \end{align}
    where the equality holds if and only if $\lambda_i=0$ for all $i$, i.e., $\mX=\mathbf{0}$. This completes the proof.
\end{proof}

\begin{lemma} \label{lemma:kron-properties}
    For positive semidefinite matrices $\mA$, $\mB$, and $\mC$, the following inequality holds
    \begin{align}
        \mA \otimes \mB \preceq \mA \otimes \mC \quad \text{if and only if} \quad \mB \preceq \mC
    \end{align}
    where $\otimes$ is the Kronecker product and $\preceq$ denotes the Lowner order, i.e., $\mX \preceq \mY$ means $\mY-\mX$ is positive semidefinite.
\end{lemma}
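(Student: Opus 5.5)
The statement is Lemma~\ref{lemma:kron-properties}: for $\mA,\mB,\mC \succeq \mathbf{0}$, we want $\mA\otimes\mB \preceq \mA\otimes\mC$ if and only if $\mB \preceq \mC$. As literally stated, the ``only if'' direction needs $\mA\neq\mathbf{0}$, since $\mA=\mathbf{0}$ makes the left side trivially true. I will prove it under that standing assumption and note the caveat. The whole argument rests on bilinearity, $\mA\otimes\mC-\mA\otimes\mB=\mA\otimes(\mC-\mB)$, so the claim reduces to showing that, for $\mA\succeq\mathbf{0}$ and $\mA\neq\mathbf{0}$, $\mA\otimes\mD\succeq\mathbf{0}$ holds if and only if $\mD\succeq\mathbf{0}$, with $\mD=\mC-\mB$ symmetric.

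\textbf{Sufficiency.} I would diagonalize $\mA=\mU\mLambda_A\mU^T$ and $\mD=\mV\mLambda_D\mV^T$ with orthogonal $\mU,\mV$. The mixed-product property listed in the Notation paragraph gives
\begin{align}
\mA\otimes\mD=(\mU\otimes\mV)(\mLambda_A\otimes\mLambda_D)(\mU\otimes\mV)^T,
\end{align}
and $\mU\otimes\mV$ is orthogonal. So the eigenvalues of $\mA\otimes\mD$ are exactly the products $\lambda_i(\mA)\lambda_j(\mD)$. If $\mD\succeq\mathbf{0}$, all these products are nonnegative, hence $\mA\otimes\mD\succeq\mathbf{0}$.

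\textbf{Necessity.} Suppose $\mA\otimes\mD\succeq\mathbf{0}$. Pick a unit eigenvector $\va$ of $\mA$ with eigenvalue $\lambda>0$; one exists because $\mA\succeq\mathbf{0}$ and $\mA\neq\mathbf{0}$. For any vector $\vx$, test with $\va\otimes\vx$:
\begin{align}
0\le(\va\otimes\vx)^T(\mA\otimes\mD)(\va\otimes\vx)=(\va^T\mA\va)(\vx^T\mD\vx)=\lambda\,\vx^T\mD\vx.
\end{align}
Dividing by $\lambda$ gives $\vx^T\mD\vx\ge 0$ for all $\vx$, i.e.\ $\mD\succeq\mathbf{0}$. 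Substituting $\mD=\mC-\mB$ back into the bilinear identity proves both directions.

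The only real subtlety is the degenerate case $\mA=\mathbf{0}$, which is why I add the nonzero assumption. In the paper's application, to the low-rank sensitivity design, only the ``if'' direction is used, with $\mA=\vv_l\vv_l^T$ and $\mB\preceq\mC$ from $\mN_l\mN_l^T\preceq\Norm{\mN_l}_2^2\mI$. That direction holds without any caveat. The remaining steps are routine.
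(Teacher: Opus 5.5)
Your proof is correct and follows essentially the same route as the paper: sufficiency via $\mA\otimes\mC-\mA\otimes\mB=\mA\otimes(\mC-\mB)$ and the positive semidefiniteness of a Kronecker product of PSD matrices, and necessity by testing with product vectors $\vx\otimes\vy$. Two differences are worth noting: you actually verify the PSD property of the Kronecker product through the eigendecomposition, where the paper simply asserts it; and your caveat $\mA\neq\mathbf{0}$ is exactly the assumption the paper uses implicitly when it picks some $\vx$ with $\vx^T\mA\vx>0$.
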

\begin{proof}
    \textbf{Forward pass.} Given $\mA \otimes \mB \preceq \mA \otimes \mC$, for any vectors $\vx$ and $\vy$, we have
    \begin{align}
        (\vx \otimes \vy)^T (\mA \otimes \mB) (\vx \otimes \vy) \le (\vx \otimes \vy)^T (\mA \otimes \mC) (\vx \otimes \vy).
    \end{align}
    Using the identity $(\mU \otimes \mV) (\vx \otimes \vy)=(\mU \vx) \otimes (\mV \vy)$, we have
    \begin{align}
        (\vx^T \mA \vx)(\vy^T \mB \vy) \le (\vx^T \mA \vx)(\vy^T \mC \vy).
    \end{align}
    As there exist some $\vx$ such that $\vx^T \mA \vx > 0$, then we have, for any vector $\vy$, that
    \begin{align}
        \vy^T \mB \vy \le \vy^T \mC \vy,
    \end{align}
    which yields $\mB \preceq \mC$.

    \textbf{Backward pass.} Given $\mB \preceq \mC$, it follows $\mC-\mB \succeq \mathbf{0}$ and $\mA \otimes (\mC-\mB)$ is also positive semidefinite, i.e.,
    \begin{align}
        \mA \otimes (\mC-\mB) = \mA \otimes \mC- \mA \otimes \mB \succeq \mathbf{0},
    \end{align}
    which implies $\mA \otimes \mB \preceq \mA \otimes \mC$.
\end{proof}

\begin{lemma}
\label{lemma:kron-norm-eq}
    For two matrices $\mA$ and $\mB$, we have $\Norm{\mA \otimes \mB}_2 = \Norm{\mA}_2 \Norm{\mB}_2$ and $\Norm{\mA \otimes \mB}_F = \Norm{\mA}_F \Norm{\mB}_F$.
\end{lemma}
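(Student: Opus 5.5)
The statement is Lemma \ref{lemma:kron-norm-eq}: $\Norm{\mA \otimes \mB}_2 = \Norm{\mA}_2 \Norm{\mB}_2$ and $\Norm{\mA \otimes \mB}_F = \Norm{\mA}_F \Norm{\mB}_F$. I would prove the Frobenius identity by direct entrywise computation and the spectral identity through the singular value decomposition, using the mixed-product rule $(\mA \otimes \mB)(\mC \otimes \mD)=(\mA\mC) \otimes (\mB \mD)$ and the transpose rule $(\mA \otimes \mB)^T=\mA^T \otimes \mB^T$ recalled in the Notation paragraph.

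\textbf{Frobenius identity.} By definition, the $(i,j)$-th block of $\mA \otimes \mB$ is $\mA_{ij}\mB$. Summing the squared entries block by block gives
\begin{align}
\Norm{\mA \otimes \mB}_F^2 = \sum_{i,j} \mA_{ij}^2 \Norm{\mB}_F^2 = \Norm{\mA}_F^2 \Norm{\mB}_F^2.
\end{align}
Equivalently, one can write $\Tr\big((\mA\otimes\mB)^T(\mA\otimes\mB)\big)=\Tr(\mA^T\mA \otimes \mB^T\mB)=\Tr(\mA^T\mA)\Tr(\mB^T\mB)$, using the trace rule for Kronecker products.

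\textbf{Spectral identity.} Take SVDs $\mA = \mU_A \Sigma_A \mV_A^T$ and $\mB = \mU_B \Sigma_B \mV_B^T$. By the mixed-product rule,
\begin{align}
\mA \otimes \mB = (\mU_A \otimes \mU_B)(\Sigma_A \otimes \Sigma_B)(\mV_A \otimes \mV_B)^T.
\end{align}
The factors $\mU_A \otimes \mU_B$ and $\mV_A \otimes \mV_B$ are orthogonal, since for example $(\mU_A \otimes \mU_B)^T(\mU_A \otimes \mU_B) = \mU_A^T\mU_A \otimes \mU_B^T\mU_B = \mI$. The middle factor $\Sigma_A \otimes \Sigma_B$ is a rectangular diagonal matrix whose diagonal entries are the products $s_i(\mA)s_j(\mB)\ge 0$. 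So the display above is an SVD of $\mA\otimes\mB$, and its singular values are exactly these products. The largest of them is $\Norm{\mA}_2\Norm{\mB}_2$, which proves the claim.

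\textbf{Where care is needed.} The only delicate point is checking that $\Sigma_A \otimes \Sigma_B$ really is a rectangular diagonal matrix when $\mA$ and $\mB$ are not square. The way to confirm this is to note that entry $((i-1)p+k,\,(j-1)q+l)$ of the product equals $(\Sigma_A)_{ij}(\Sigma_B)_{kl}$, which is nonzero only when $i=j$ and $k=l$, i.e., only on the diagonal. If one prefers to avoid this bookkeeping, there is an alternative route. The matrix $(\mA\otimes\mB)^T(\mA\otimes\mB)=\mA^T\mA\otimes \mB^T\mB$ is symmetric positive semidefinite, and its eigenvalues are the products of the eigenvalues of the two factors; this follows by forming Kronecker products of their eigenvectors, which yield a complete eigenbasis. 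Its largest eigenvalue is therefore $\Norm{\mA}_2^2\Norm{\mB}_2^2$, and taking square roots finishes the proof.
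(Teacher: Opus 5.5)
Your Frobenius computation is correct, and so is your fallback argument for the spectral norm via the eigendecomposition of $\mA^T\mA\otimes\mB^T\mB$. Kronecker products of the two orthonormal eigenbases give a complete orthonormal eigenbasis whose eigenvalues are the pairwise products, and that route alone proves the lemma. However, the check you present as settling the ``delicate point'' of your main SVD route is false. With full SVDs, $\Sigma_B\in\R^{p\times q}$. By your own index formula, a nonzero entry $(\Sigma_A)_{ii}(\Sigma_B)_{kk}$ sits at row $(i-1)p+k$ but at column $(i-1)q+k$, and these agree only if $i=1$ or $p=q$. So when $\mB$ is non-square and $\mA$ has rank at least two, $\Sigma_A\otimes\Sigma_B$ is not rectangular diagonal. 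For example, $\Sigma_A=\mI_2$ and $\Sigma_B=(1,0)^T$ give a $4\times 2$ matrix with ones at positions $(1,1)$ and $(3,2)$ and a zero at $(2,2)$. Your displayed factorization is therefore not an SVD as claimed.

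The repair is immediate, and there are three options:
\begin{itemize}
\item $\Sigma_A\otimes\Sigma_B$ still has at most one nonzero entry in every row and column. Row and column permutations, which are orthogonal and can be absorbed into $\mU_A\otimes\mU_B$ and $\mV_A\otimes\mV_B$, bring it to rectangular diagonal form.
\item For the spectral norm alone, orthogonal invariance suffices, together with the fact that such a matrix has spectral norm equal to its largest absolute entry, here $s_1(\mA)s_1(\mB)$.
\item The cleanest fix is to use compact SVDs. Then $\Sigma_A,\Sigma_B$ are square diagonal, and $\mU_A\otimes\mU_B$, $\mV_A\otimes\mV_B$ have orthonormal columns.
\end{itemize}
The third option is exactly the paper's argument. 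It writes $\mA\otimes\mB=\sum_{i,j}\sigma_i\lambda_j(\vu_i\otimes\vx_j)(\vv_i\otimes\vy_j)^T$, observes that the Kronecker products of singular vectors are orthonormal, and reads off the singular values $\sigma_i\lambda_j$. This gives both identities at once, the Frobenius one as $\sum_{i,j}\sigma_i^2\lambda_j^2$. Your entrywise Frobenius proof is more elementary than that, since it needs no SVD at all.
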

\begin{proof}
    Let the SVD of $\mA$ and $\mB$ be $\mA=\sum_{i} \sigma_i \vu_i \vv_i^T$ and $\mB=\sum_i \lambda_i \vx_i \vy_i^T$, where $\{\vu_i\}$, $\{\vv_i\}$, $\{\vx_i\}$, and $\{\vy_i\}$ are their left and right singular vectors. Thus, we have
    \begin{align}
        \mA \otimes \mB &= (\sum_i \sigma_i \vu_i \vv_i^T) \otimes (\sum_i \vx_i \vy_i^T)\\
        &= \sum_i \sum_j  \sigma_i \lambda_j (\vu_i \otimes \vx_j)(\vv_i^T \otimes \vy_j^T)\\
        &= \sum_i \sum_j \sigma_i \lambda_j (\vu_i \otimes \vx_j)(\vv_i \otimes \vy_j)^T.
    \end{align}
    As any two vectors in $\{\vv_i \otimes \vx_j\}_{i,j}$ are orthogonal for different $(i,j)$, so are $\{\vu_i \otimes \vy_j\}_{i,j}$, it follows that the two sets of orthogonal vectors are eigenvectors of $\mA \otimes \mB$ with corresponding singular values $\{\sigma_i \lambda_j\}_{i,j}$. Therefore, we have
    \begin{align}
        \Norm{\mA \otimes \mB}_2 &= \max_i \sigma_i \max_j \lambda_j = \Norm{\mA}_2 \Norm{\mB}_2\\
        \Norm{\mA \otimes \mB}_F^2 &= \sum_i \sum_j \sigma_i^2  \lambda_j^2 = \Norm{\mA}_F^2 \Norm{\mB}_F^2.
    \end{align}
\end{proof}

\begin{lemma} \label{lemma:kron-norm}
    Given matrices $\mA$, $\mX$, and $\mB$, we have $\Norm{\mA \mX \mB}_F \le \Norm{\mA}_2 \Norm{\mB}_2 \Norm{\mX}_F$.
\end{lemma}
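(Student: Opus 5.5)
The plan is to reduce the Frobenius norm to the norm of a matrix--vector product via vectorization, and then use the Kronecker identities already recorded in Lemma \ref{lemma:kron-norm-eq}.

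First, set $\vx=\mathrm{vec}(\mX)$. The identity $\mathrm{vec}(\mA\mX\mB)=(\mB^T\otimes\mA)\,\mathrm{vec}(\mX)$, quoted in the Notation paragraph, gives
\begin{align}
\Norm{\mA\mX\mB}_F=\Norm{\mathrm{vec}(\mA\mX\mB)}_2=\Norm{(\mB^T\otimes\mA)\vx}_2 .
\end{align}
The Frobenius norm of a matrix is the $\ell_2$ norm of its vectorization, so $\Norm{\vx}_2=\Norm{\mX}_F$.

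Second, apply the operator-norm inequality $\Norm{\mC\vx}_2\le\Norm{\mC}_2\Norm{\vx}_2$ with $\mC=\mB^T\otimes\mA$. This yields
\begin{align}
\Norm{\mA\mX\mB}_F\le\Norm{\mB^T\otimes\mA}_2\Norm{\mX}_F .
\end{align}

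Third, invoke Lemma \ref{lemma:kron-norm-eq} to get $\Norm{\mB^T\otimes\mA}_2=\Norm{\mB^T}_2\Norm{\mA}_2$. Since transposition preserves singular values, $\Norm{\mB^T}_2=\Norm{\mB}_2$, and the stated bound follows.

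The argument has no real obstacle. The only point to check is that the vectorization identity is used with the paper's column-stacking convention, so that the Kronecker factor is $\mB^T\otimes\mA$ and not $\mA\otimes\mB^T$. Even this is harmless, because the spectral norm of a Kronecker product does not depend on the order of the factors.

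If one prefers to avoid Kronecker products altogether, there is a direct route. Use $\Norm{\mA\mY}_F\le\Norm{\mA}_2\Norm{\mY}_F$, which follows by applying $\Norm{\mA\vy}_2\le\Norm{\mA}_2\Norm{\vy}_2$ to each column $\vy$ of $\mY$ and summing the squares. Then use $\Norm{\mY\mB}_F=\Norm{\mB^T\mY^T}_F\le\Norm{\mB}_2\Norm{\mY}_F$. Taking $\mY=\mX\mB$ in the first inequality and then $\mY=\mX$ in the second chains the two and gives the same result.
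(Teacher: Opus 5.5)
Your proof is correct and matches the paper's argument: vectorize via $\mathrm{vec}(\mA\mX\mB)=(\mB^T\otimes\mA)\mathrm{vec}(\mX)$, apply the operator-norm bound, then factor $\Norm{\mB^T\otimes\mA}_2=\Norm{\mA}_2\Norm{\mB}_2$ using Lemma \ref{lemma:kron-norm-eq}. Your Kronecker-free alternative, which bounds $\Norm{\mA\mY}_F$ column by column and handles $\mB$ by transposition, is also valid and slightly more elementary, since it does not depend on the Kronecker spectral-norm identity.
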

\begin{proof}
For any matrix $\mX$, the vectorization operator $\vec(\cdot)$ stacks the columns of a matrix into a column vector so that $\Norm{\mX}_F^2=\Norm{\vec(\mX)}_2^2$. Then, it follows that
\begin{align}
    \Norm{\mA \mX \mB}_F^2 &= \Norm{\vec(\mA \mX \mB)}_2^2\\
    &=\Norm{(\mB^T \otimes \mA)\vec(\mX)}_2^2 \\
    &\le  \Norm{\mB^T \otimes \mA}_2^2 \Norm{\vec(\mX)}_2^2\\
    &=\Norm{\mA}_2^2 \Norm{\mB}_2^2 \Norm{\vec(\mX)}_2^2\\
    &=\Norm{\mA}_2^2 \Norm{\mB}_2^2 \Norm{\mX}_F^2
\end{align}
where the inequality is due to the fact that the Euclidean norm of a matrix-vector product is bounded by the spectral norm of the matrix times the Euclidean norm of the vector.
\end{proof}

\begin{lemma} \label{lemma:graph-properties}
    Consider a graph $G=(V,E)$ with adjacency matrix $\mA_G \in \{0,1\}^{n \times n}$ and its. Let $\mI + \mA_G$ be the adjacency matrix with self-loops, and $\Tilde{\mD}=\mI+\mD$ with $\mD=\mathrm{diag}(\mA_G\vone)$ the corresponding degree matrix.
    
    For vanilla GCNs ${\mL} = \Tilde{\mD}^{-\frac{1}{2}} (\mI + \mA_G) \Tilde{\mD}^{-\frac{1}{2}}$, we have
    \begin{align}
        \rho(\mL)=\Norm{{\mL}}_2 = 1,\\
        \frac{\sum_i \sqrt{1+\mD_{ii}}}{\sqrt{\sum_i (1+\mD_{ii})}} \le \Norm{{\mL}^{d-1}\vone}_2 &\le \sqrt{n}.
    \end{align}
    
    For random walk GCNs $\mL_{\mathrm{rw}}=\Tilde{\mD}^{-1} (\mI + \mA_G)$, we have
    \begin{align}
        \rho(\mL_{\mathrm{rw}})= 1, &\quad \Norm{\mL_{\mathrm{rw}}^{d-1}\vone}_2^2 = n \\ 1 \le \Norm{{\mL_{\mathrm{rw}}}}_2 &\le \sqrt{\frac{D_{\max}+1}{D_{\min}+1}}.
    \end{align}
\end{lemma}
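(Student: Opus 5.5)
The plan is to treat the two operators separately, using one structural fact for both. The random-walk operator $\mL_{\mathrm{rw}}=\Tilde{\mD}^{-1}(\mI+\mA_G)$ is entrywise nonnegative and row-stochastic. The symmetric operator $\mL=\Tilde{\mD}^{-1/2}(\mI+\mA_G)\Tilde{\mD}^{-1/2}$ is similar to it:
\begin{align}
\mL=\Tilde{\mD}^{1/2}\mL_{\mathrm{rw}}\Tilde{\mD}^{-1/2}.
\end{align}
Every statement then follows from this similarity together with elementary norm inequalities.

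\textbf{Random walk GCN.} Since each row of $\mI+\mA_G$ sums to $1+D_i$, we have $\mL_{\mathrm{rw}}\vone=\vone$. Iterating gives $\mL_{\mathrm{rw}}^{d-1}\vone=\vone$, hence $\Norm{\mL_{\mathrm{rw}}^{d-1}\vone}_2^2=n$.

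For the spectral radius, the induced $\ell_\infty$ operator norm of a nonnegative row-stochastic matrix equals $1$. Since every induced norm dominates the spectral radius, $\rho(\mL_{\mathrm{rw}})\le 1$. The eigenvalue $1$ with eigenvector $\vone$ then gives $\rho(\mL_{\mathrm{rw}})=1$, and hence $\Norm{\mL_{\mathrm{rw}}}_2\ge\rho(\mL_{\mathrm{rw}})=1$.

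For the upper bound, write $\mL_{\mathrm{rw}}=\Tilde{\mD}^{-1/2}\mL\Tilde{\mD}^{1/2}$ and use submultiplicativity:
\begin{align}
\Norm{\mL_{\mathrm{rw}}}_2\le \Norm{\Tilde{\mD}^{-1/2}}_2\Norm{\mL}_2\Norm{\Tilde{\mD}^{1/2}}_2=\sqrt{\tfrac{D_{\max}+1}{D_{\min}+1}}\,\Norm{\mL}_2 .
\end{align}
This reduces the bound to the vanilla claim $\Norm{\mL}_2=1$.

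\textbf{Vanilla GCN.} By the similarity, $\mL$ has the same spectrum as $\mL_{\mathrm{rw}}$, so $\rho(\mL)=1$. Because $\mL$ is symmetric, its spectral norm equals its spectral radius, so $\Norm{\mL}_2=1$. This closes the random-walk upper bound as well.

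The upper estimate is then immediate:
\begin{align}
\Norm{\mL^{d-1}\vone}_2\le\Norm{\mL}_2^{d-1}\Norm{\vone}_2=\sqrt n .
\end{align}

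For the lower estimate, I would use the explicit Perron eigenvector of $\mL$ for eigenvalue $1$. Because $\mL_{\mathrm{rw}}\vone=\vone$, the similarity gives $\mL\,\Tilde{\mD}^{1/2}\vone=\Tilde{\mD}^{1/2}\vone$. Define the unit vector $\vv_1=\Tilde{\mD}^{1/2}\vone/\Norm{\Tilde{\mD}^{1/2}\vone}_2$. By Cauchy--Schwarz and symmetry of $\mL$,
\begin{align}
\Norm{\mL^{d-1}\vone}_2\ge\Abs{\vv_1^T\mL^{d-1}\vone}=\Abs{(\mL^{d-1}\vv_1)^T\vone}=\Abs{\vv_1^T\vone}=\frac{\sum_i\sqrt{1+\mD_{ii}}}{\sqrt{\sum_i(1+\mD_{ii})}} .
\end{align}
This argument deliberately avoids any nonnegativity assumption on the other eigenvalues, since it only uses the fixed point $\vv_1$.

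\textbf{Main obstacle.} The only genuinely delicate step is pinning down $\rho=1$ and transferring it to the spectral norm. I would do this via the $\ell_\infty$-norm bound on the stochastic matrix, transported to $\mL$ by similarity, and then invoke symmetry. Everything else reduces to routine submultiplicativity of norms.
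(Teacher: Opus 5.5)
Your proof is correct and follows the paper's route: the similarity $\mL_{\mathrm{rw}}=\Tilde{\mD}^{-1/2}\mL\Tilde{\mD}^{1/2}$, the fixed vectors $\vone$ and $\Tilde{\mD}^{1/2}\vone$, symmetry to get $\Norm{\mL}_2=\rho(\mL)=1$, submultiplicativity for $\Norm{\mL_{\mathrm{rw}}}_2$, and projection onto $\vv_1$ for the lower bound. Your Cauchy--Schwarz step is just the paper's ``keep the $\vv_1$ coefficient of the eigen-expansion.'' One point in your favor: the paper bounds $\rho(\mL_{\mathrm{rw}})\le 1$ via Perron--Frobenius and also asserts that all eigenvalues lie in $[0,1]$, which is false in general (the star $K_{1,3}$ with self-loops gives spectrum $\{1,\tfrac12,\tfrac12,-\tfrac14\}$), whereas your $\ell_\infty$-norm argument gives $\Abs{\lambda_i}\le 1$ directly, and that is all either proof actually needs.
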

\begin{proof}
We first show that $\mL$ and $\mL_{\mathrm{rw}}$ have the same set of eigenvalues, followed by the separate proofs.

Note that $\mL_{\mathrm{rw}}=\Tilde{\mD}^{-1}(\mI+\mA_G)$ is a random walk matrix, which is a transition matrix of a lazy random walk, such that at each step one can stay still or move to a neighbor. The elements in $\mL_{\mathrm{rw}}$ are the transition probabilities. It is readily verified that $\mL$ and $\mL_{\mathrm{rw}}$ are similar in the sense that
\begin{align}
    \mL_{\mathrm{rw}} &= \Tilde{\mD}^{-1}(\mI+\mA_G) \\
    &= \Tilde{\mD}^{-\frac{1}{2}} \Tilde{\mD}^{-\frac{1}{2}}(\mI+\mA_G) \Tilde{\mD}^{-\frac{1}{2}} \Tilde{\mD}^{\frac{1}{2}} = \Tilde{\mD}^{-\frac{1}{2}} \mL \Tilde{\mD}^{\frac{1}{2}}
\end{align}
so that they share the same set of eigenvalues $\{\lambda_i\}_{i=1}^n$.
As $\mL_{\mathrm{rw}}$ is diagonally similar to a symmetric matrix $\mL$, the eigenvalues are real-valued.
Since $\mL_{\mathrm{rw}}$ is a row-stationary matrix, by the Perron-Frobenius theorem, we have $0 \le \lambda_i \le 1$ for all $i$.

\paragraph{Vanilla GCNs}
Given that the eigenvalues of ${\mL}$ lie in the range of [0,1], we have    
    \begin{align}
        \Tilde{\mD}^{-\frac{1}{2}} (\mI + \mA_G) \Tilde{\mD}^{-\frac{1}{2}} \Tilde{\mD}^{\frac{1}{2}} \vone &= \Tilde{\mD}^{-\frac{1}{2}} (\mI + \mA_G)  \vone \\
        &= \Tilde{\mD}^{-\frac{1}{2}} \Tilde{\mD}  \vone = \Tilde{\mD}^{\frac{1}{2}} \vone,
    \end{align}
    which indicates $\Tilde{\mD}^{\frac{1}{2}} \vone$ is an eigenvector of ${\mL}$ with eigenvalue 1. Since $\mL$ is symmetric, we have $\rho(\mL)=\Norm{{\mL}}_2 = \lambda_{\max}({\mL})=1$.

    Let $(\lambda_i,\vv_i)$ be eigenpairs of ${\mL}$ with the eigenvalues $\lambda_i \in [0,1]$. If letting $\lambda_1=1$ be the largest eigenvalue, then $\vv_1=\frac{\Tilde{\mD}^{\frac{1}{2}} \vone}{\Norm{\Tilde{\mD}^{\frac{1}{2}} \vone}_2}$ is the normalized eigenvector. Let $\vone=\sum_{i=1}^n \alpha_i \vv_i$ with $\{\alpha_i\}_{i=1}^n$ being the coefficients of projecting $\vone$ on the eigenspace $\mathrm{span}(\{\vv_i\}_{i=1}^n)$. Therefore,
    \begin{align}
        {\mL}^{d-1}\vone &= \left(\sum_{i=1}^n \lambda_i^{d-1} \vv_i \vv_i^T \right) \left(\sum_{j=1}^n {\alpha_j} \vv_j\right) \\
        &= \sum_{i=1}^n {\alpha_i} \lambda_i^{d-1} \vv_i.
    \end{align}
    Since $\lambda_i \in [0,1]$, we have
    \begin{align}
        \Norm{{\mL}^{d-1}\vone}_2^2 = \sum_{i=1}^n {\alpha_i^2} \lambda_i^{2(d-1)} \le \sum_{i=1}^n {\alpha_i^2} = \Norm{\vone}_2^2 = n. 
    \end{align}
As a side remark, the over-smoothing occurs as $d$ increases because all terms where $\Abs{\lambda_i}<1$ decay to zero, leaving only the components aligned with $\Abs{\lambda_i}=1$.
    
    On the other hand
    \begin{align}
        \Norm{{\mL}^{d-1}\vone}_2 \ge \alpha_1 \lambda_1^{d-1} =\alpha_1
    \end{align}
    where 
    \begin{align}
        \alpha_1 = \vone^T \vv_1 = \frac{\vone^T\Tilde{\mD}^{\frac{1}{2}} \vone}{\Norm{\Tilde{\mD}^{\frac{1}{2}} \vone}_2} = \frac{\sum_i \sqrt{1+\mD_{ii}}}{\sqrt{\sum_i (1+\mD_{ii})}},
    \end{align}
    given $\vone=\sum_{i=1}^n \alpha_i \vv_i$. Note here that the upper and lower bounds of $\Norm{{\mL}^{d-1}\vone}_2$ coincide when $G$ is a regular graph with the same degree for all nodes. 

\paragraph{Random walk GCNs}
Note that ${\mL}_{\mathrm{rw}}$ shares the same eigenvalues $\lambda_i \in [0,1]$, yet the different eigenvectors due to the asymmetry. As ${\mL}_{\mathrm{rw}}$ is a row-stationary matrix, we have $\mL \vone = \vone$, i.e., $\lambda_1=1$. Thus, we have $\rho({\mL}_{\mathrm{rw}})=1$ and
\begin{align}
    \mL^{d-1} \vone = \vone, \text{ for any $d$}.
\end{align}
This yields $\Norm{\mL^{d-1} \vone}_2=\Norm{\vone}_2=\sqrt{n}$.
Further, we have the lower bound $\Norm{{\mL}_{\mathrm{rw}}}_2 \ge \rho({\mL}_{\mathrm{rw}})=1$ and the upper bound
\begin{align}
    \Norm{{\mL}_{\mathrm{rw}}}_2 &= \Norm{\Tilde{\mD}^{-\frac{1}{2}} \mL \Tilde{\mD}^{\frac{1}{2}}} \le \Norm{\Tilde{\mD}^{-\frac{1}{2}}}_2 \Norm{\mL}_2 \Norm{ \Tilde{\mD}^{\frac{1}{2}}}_2 \\
    &=\sqrt{\frac{D_{\max}+1}{D_{\min}+1}}.
\end{align}

This completes the proof.
\end{proof}

\begin{lemma} \label{lemma:GCN-pert-bound-new}
Let $f_{\vw}(\mX)$ be a GCN model with $d$ layers, such that $\Norm{\mX}_{2,\infty}\le B$. Let $\mL$ be the graph propagation matrix. Given any weight perturbation $\mU_l$ to the weight $\mW_l$, we have the output difference bounded by
    \begin{align} \nn
        \Norm{f_{\vw+\vu}(\mX)-f_{\vw}(\mX)}_2^2 \le d B^2 \Norm{{\mL}}_2^{2d-2} \prod_{l=1}^d \Norm{\mW_l}_2^2 \sum_{l=1}^d \frac{\Norm{\mU_l}_2^2}{\Norm{\mW_l}_2^2}.
    \end{align} 
\end{lemma}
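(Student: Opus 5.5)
The plan is to use the first-order (Jacobian) linearization from Lemma \ref{lemma:jacobian-matrix}, combined with Cauchy--Schwarz over layers, and then bound each layer's Jacobian in operator norm. Write $\mJ_l$ as in Lemma \ref{lemma:jacobian-matrix} and $\vu_l=\vec(\mU_l)$. The first-order Taylor expansion, as used in the proofs of Theorems \ref{thm:generalization-bounds-spatial} and \ref{thm:generalization-bounds-spectral}, gives
\begin{align}
\Norm{f_{\vw+\vu}(\mX)-f_{\vw}(\mX)}_2^2 \approx \Norm{\sum_{l=1}^d \mJ_l \vu_l}_2^2 \le d\sum_{l=1}^d \Norm{\mJ_l\vu_l}_2^2 .
\end{align}
The inequality is Cauchy--Schwarz, $(\sum_l a_l)^2\le d\sum_l a_l^2$. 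The whole task is therefore to show that
\begin{align}
\Norm{\mJ_l\vu_l}_2\le B\Norm{\mL}_2^{d-1}\prod_{k\ne l}\Norm{\mW_k}_2\,\Norm{\mU_l}_2 .
\end{align}
Multiplying and dividing by $\Norm{\mW_l}_2^2$ then yields the claimed form.

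I will not bound $\mJ_l$ through its Kronecker form, since that would produce Frobenius-type factors. Instead I apply $\mJ_l$ to $\vu_l$ directly and read off the matrix perturbation it represents.

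\begin{enumerate}
\item The factor $\mB_l(\mI_{h_l}\otimes \mL\mH_{l-1})\vu_l$ equals $\vec(\mB_l\odot \mL\mH_{l-1}\mU_l)$. Because $\mB_l$ is a 0/1 mask, its Frobenius norm is at most $\Norm{\mL\mH_{l-1}\mU_l}_F$.
\item Each propagation factor $\mB_k(\mW_k^T\otimes\mL)$ acts as $\Delta\mapsto \mB_k\odot(\mL\Delta\mW_k)$. By Lemma \ref{lemma:kron-norm} this multiplies the Frobenius norm by at most $\Norm{\mL}_2\Norm{\mW_k}_2$.
\item The readout $\frac1n(\mW_d^T\otimes\vone^T)$ maps $\Delta$ to $\frac1n\vone^T\Delta\mW_d$. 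Since $\Norm{\vone}_2=\sqrt n$, this has norm at most $\frac{1}{\sqrt n}\Norm{\Delta}_F\Norm{\mW_d}_2$.
\end{enumerate}

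For the forward activations I use that ReLU is 1-Lipschitz with $\phi(0)=0$, so $\Norm{\mH_{l-1}}_F\le \Norm{\mL}_2^{l-1}\prod_{k<l}\Norm{\mW_k}_2\Norm{\mX}_F$. Together with $\Norm{\mL\mH_{l-1}\mU_l}_F\le\Norm{\mL}_2\Norm{\mH_{l-1}}_F\Norm{\mU_l}_2$ and $\Norm{\mX}_F\le\sqrt n B$, the chain collapses: the powers of $\Norm{\mL}_2$ add to $1+(l-1)+(d-1-l)=d-1$, and the $\sqrt n$ from $\Norm{\mX}_F$ cancels the $1/\sqrt n$ from the readout. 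This gives exactly the per-layer bound above.

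The main obstacle is that the statement is exact, with no remainder term, whereas the Taylor expansion leaves an $o(\Norm{\vu}^2)$ term. The paper's own convention drops it, and I will follow that convention. To make the argument rigorous I would instead telescope. Write the difference as $\sum_l [f(\mW_1{+}\mU_1,\dots,\mW_l{+}\mU_l,\mW_{l+1},\dots)-f(\dots,\mW_l,\dots)]$ and bound each term by the same Frobenius-norm chain using Lipschitzness rather than derivatives. This replaces $\prod_{k<l}\Norm{\mW_k}_2$ by $\prod_{k<l}\Norm{\mW_k+\mU_k}_2$, which can be absorbed only under a smallness condition such as $\Norm{\mU_k}_2\le\frac1d\Norm{\mW_k}_2$, at the cost of a factor $e^2$. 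The clean statement therefore holds in the linearized sense, and up to $e^2$ under the standard smallness assumption of \cite{liao2020pac}.
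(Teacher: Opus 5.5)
Your proof follows the paper's route: a first-order Taylor expansion, Cauchy--Schwarz over the $d$ layers, and the per-layer bound $\Norm{\mJ_l\vu_l}_2\le B\Norm{\mL}_2^{d-1}\prod_{k\ne l}\Norm{\mW_k}_2\Norm{\mU_l}_2$, built from $\Norm{\mL\mH_{l-1}\mU_l}_F\le\Norm{\mL}_2\Norm{\mH_{l-1}}_F\Norm{\mU_l}_2$ and the forward bound on $\Norm{\mH_{l-1}}_F$. The paper packages the backward factor into the scalar $\alpha_l$ of a sensitivity matrix $\mA_l$, and your direct masked chain is the more careful justification, since the paper's bound on $\Norm{\mG_l}_2$ collapses the Kronecker product while ignoring the interleaved masks $\mB_k$. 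Your caveat about the remainder is also correct, and the paper's proof has the same limitation because it silently drops the $o(\cdot)$ term: with $n=h_0=h_1=h_2=1$, $\mX=\mL=\mW_1=\mW_2=1$ (so $B=1$, $d=2$) and $\mU_1=\mU_2=u>0$, one gets $((1+u)^2-1)^2>4u^2$, so the statement holds only to first order, or with the extra $e^2$ under $\Norm{\mU_l}_2\le\frac{1}{d}\Norm{\mW_l}_2$ as you describe.
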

\begin{proof}
Let 
\begin{align}
    {\mA}_l= \alpha_l \big(\mI_{h_l} \otimes ({\mL}\mH_{l-1})\big)
\end{align}
with 
\begin{align}
    \alpha_l &= \frac{\sqrt{d}}{n} \Norm{\vone^T\mL^{d-l-1}}_2 \prod_{k=d}^{l+1} \Norm{\mW_k}_2.
\end{align}
    Therefore, we have
\begin{align}
    \Norm{{\mA}_l \vu_l}_2^2 &= \Norm{{\mA}_l \vec(\mU_l)}_2^2\\
    &=\alpha_l^2 \Norm{\vec({\mL}\mH_{l-1}\mU_l)}_2^2\\
    &=\alpha_l^2 \Norm{{\mL}\mH_{l-1}\mU_l}_F^2\\
    &\le \alpha_l^2 \Norm{{\mL}\mH_{l-1}}_F^2 \Norm{\mU_l}_2^2\\
    &\le \alpha_l^2 \Norm{{\mL}}_2^2 \Norm{\mH_{l-1}}_F^2 \Norm{\mU_l}_2^2\\
    &\le \alpha_l^2 \Norm{\mX}_F^2 \Norm{{\mL}}_2^{2l} \prod_{k=1}^{l-1} \Norm{\mW_k}_2^2 \Norm{\mU_l}_2^2\\
    &\le \frac{d}{n}\Norm{\mX}_F^2 \Norm{{\mL}}_2^{2d-2} \prod_{k\neq l}^d \Norm{\mW_k}_2^2 \Norm{\mU_l}_2^2\\
    &\le d B^2 \Norm{{\mL}}_2^{2d-2} \prod_{k\neq l}^d \Norm{\mW_k}_2^2 \Norm{\mU_l}_2^2
\end{align}
where $\Norm{\mH_{l}}_F \le \Norm{\mL}_2 \Norm{\mH_{l-1}}_F \Norm{\mW_l}_2 $ due to Lemma \ref{lemma:kron-norm}, and $\alpha_l \le \sqrt{\frac{d}{n}} \Norm{\mL}_2^{d-l-1} \prod_{k=d}^{l+1} \Norm{\mW_k}_2$.

Therefore, we have
\begin{align}
    \MoveEqLeft \Norm{f_{\vw+\vu}(\mX)-f_{\vw}(\mX)}_2^2 \\
    &= \Norm{\sum_{l=1}^d \mJ_l \vu_l}_2^2 + o(\Norm{\vu_l}_2^2)\\
    &\le d \sum_{l=1}^d \Norm{ \mJ_l \vu_l}_2^2 \le \sum_{l=1}^d \Norm{ \mA_l \vu_l}_2^2\\
    &\le d B^2 \Norm{{\mL}}_2^{2d-2} \prod_{l=1}^d \Norm{\mW_l}_2^2 \sum_{l=1}^d \frac{\Norm{\mU_l}_2^2}{\Norm{\mW_k}_2^2}.
\end{align}
Compared with the GCN perturbation bounds in \cite{liao2020pac,sun2024pac}, we have the only difference of $d\sum_{l=1}^d \frac{\Norm{\mU_l}_2^2}{\Norm{\mW_k}_2^2}$ versus $(\sum_{l=1}^d \frac{\Norm{\mU_l}_2}{\Norm{\mW_k}_2})^2$, where the former is slightly larger than the latter. Especially, when $\frac{\Norm{\mU_l}_2}{\Norm{\mW_k}_2}$ is equal for all $l$, our bound is similar to those in \cite{liao2020pac,sun2024pac} without the $e^2$ factor. Nevertheless, the above GCN perturbation bound removes the factor of $e^2$ and the required condition of $\Norm{\mU_l} \le \frac{1}{d} \Norm{\mW_l}$ for all $l$. For the GCN models, $d$ is usually not big, so the above bound is comparable to those in \cite{liao2020pac,sun2024pac}.
\end{proof}

\bibliographystyle{ieeetr}
\bibliography{PAC-Bayes}

\end{document}